\documentclass[a4paper,UKenglish,cleveref, autoref, thm-restate]{tgdk-v2021}

\title{RDFdL: Integrating RDF with Differential Dynamic Logic}

\author{Yuyang Li}
{Institute AIFB, Karlsruhe Institute of Technology (KIT), Karlsruhe, Germany}
{Yuyang.Li@kit.edu}
{https://orcid.org/0009-0001-4924-7720}
{}

\author{Lukas Kubelka}
{Institute AIFB, Karlsruhe Institute of Technology (KIT), Karlsruhe, Germany}
{lukas.kubelka@kit.edu}
{https://orcid.org/0009-0008-1828-5207}
{}

\author{Julia Butte}
{Institute AIFB, Karlsruhe Institute of Technology (KIT), Karlsruhe, Germany}
{julia.butte@kit.edu}
{https://orcid.org/0009-0003-5066-8412}
{}

\author{Tobias K{\"a}fer}
{Institute AIFB, Karlsruhe Institute of Technology (KIT), Karlsruhe, Germany}
{tobias.kaefer@kit.edu}
{https://orcid.org/0000-0003-0576-7457}
{}

\authorrunning{Y. Li et al.} 

\Copyright{Yuyang Li and Tobias K{\"a}fer} 

\ccsdesc[500]{Information systems~Semantic web description languages}
\ccsdesc[500]{Computing methodologies~Knowledge representation and reasoning}
\ccsdesc[300]{Theory of computation~Logic and verification}
\ccsdesc[300]{Software and its engineering~Formal software verification}

\keywords{RDF, Differential Dynamic Logic, Hybrid Systems, Knowledge Graphs, Formal Verification}

\category{} 

\relatedversion{} 

\nolinenumbers

\Volume{1}
\Issue{1}
\Article{42}
\DateSubmission{Date of submission}
\DateAcceptance{Date of acceptance}
\DatePublished{Date of publishing}
\SectionAreaEditor{TGDK section area editor}

\usepackage{booktabs}
\usepackage{float}
\usepackage{times}
\usepackage{soul}
\usepackage{url}
\usepackage[utf8]{inputenc}
\usepackage{graphicx}
\usepackage{amsmath}
\usepackage{amsthm}
\usepackage{amssymb}
\usepackage{stmaryrd}
\newcommand{\R}{\mathbb{R}}
\newcommand{\mode}{\mathsf{mode}}
\newcommand{\isNext}{\ensuremath{\mathsf{isNext}}}
\newcommand{\modeChange}{\ensuremath{\mathsf{modeChange}}}
\newcommand{\Inv}{\mathsf{Inv}}
\usepackage{booktabs}
\usepackage{algorithm}
\usepackage{algorithmic}
\usepackage{tabularx}
\usepackage{array}
\newcolumntype{Y}{>{\centering\arraybackslash}X}
\newfloat{listing}{tbp}{lop}
\floatname{listing}{Listing}
\hypersetup{hidelinks}
\begin{document}

\maketitle

\begin{abstract}
Knowledge graphs modeled in RDF are powerful for describing static knowledge, but they cannot capture or reason about the dynamic behavior of physical systems, e.g., systems described by differential equations, which is a critical gap for AI-driven cyber-physical systems. To solve this, we propose RDFdL, a framework that integrates RDF with Differential Dynamic Logic (dL) to represent and reason about both static knowledge and the continuous dynamics of physical systems. For the dynamic part, we syntactically represent differential equations and ranges in the state space in RDF and SHACL, and provide semantics using a translation to dL. Linking RDF and dL through their shared foundation in first-order logic achieves a unique integration: verification results for safety and reachability properties in the dynamic logic domain become available as entailment to SPARQL queries over RDF data. We implement the pipeline using Apache Jena for ontology-driven RDF reasoning and KeYmaera X, the theorem prover for dL, and sketch its applicability in manufacturing.

\end{abstract}`

\section{Introduction}
\label{sec:introduction}

Knowledge graphs are used as a data layer for cyber-physical and industrial systems. They provide globally identifiable resources, typed relations, graph schemas, provenance, and query mechanisms for information such as devices, sensors, controllers, products, operators, maintenance responsibilities, and process configurations. In manufacturing settings, such information is often represented through digital-twin descriptions and asset models, and RDF is a natural representation language because it supports open graph integration across organizational and system boundaries. These graph models are well-suited for answering structural questions: which device belongs to which production cell, which button controls which actuator, which technician is responsible for which machine, or which product is processed by which unit.

However, many questions about cyber-physical systems are not only structural. They also depend on how the physical process evolves over time. A heating device changes temperature according to a differential equation; a tank level changes according to inflow and outflow; a boiler evolves according to coupled pressure and water-level dynamics. Whether a transition from one operating situation to another is possible, safe, or forbidden cannot be decided from RDF triples alone. Standard RDF, RDFS, OWL, SHACL, and SPARQL provide graph representation, schema validation, and graph querying, but by themselves, they do not solve reachability or safety problems for systems whose behavior is governed by ordinary differential equations (ODEs). A graph may state that an oven has an \texttt{On} mode and an \texttt{Off} mode, but this does not establish whether switching modes at a particular temperature preserves a safety bound.

Hybrid-system verification addresses precisely this kind of continuous-discrete behavior. Differential dynamic logic (dL) provides a specification and proof language for hybrid programs, in which discrete control actions and continuous ODE evolution can be combined into a single formal model~\cite{Platzer18,10.1007/978-3-319-21401-6_36}. In dL, one can state and prove properties such as "every execution of a controller preserves a temperature bound" or "there exists a continuous evolution from one temperature region to another under a given ODE and evolution domain." Theorem provers such as KeYmaera X provide deductive support for such proofs. Yet dL models are not knowledge graphs. They do not, by themselves, provide RDF identifiers, graph schema validation, SPARQL path queries, links to device metadata, provenance, maintenance information, product descriptions, or integration with existing digital twin data.

The central problem is therefore neither to replace hybrid-system theorem provers with RDF reasoning nor to replace RDF knowledge graphs with dL models, but to build a sound bridge between the two: verified continuous and discrete behavior should become queryable as graph data, while graph data should provide the symbolic state regions, device modes, metadata, and candidate transitions from which proof obligations are generated. This bridge is useful whenever users need to ask graph queries whose answers depend on both static metadata and formally verified dynamic behavior.

To bridge this gap, we integrate Differential Dynamic Logic (dL) with RDF. dL is a formal specification and verification language for hybrid systems that can model continuous dynamics and prove system properties \cite{Platzer18}. In dL, we can describe a process with discrete transitions, such as controller decisions or mode switches, alongside continuous evolution governed by Ordinary Differential Equations (ODEs) for the physical process \cite{Platzer18}. dL allows specifying safety properties, such as invariants, bounds, and avoidance of undesired states, and formally verifying that these properties hold for all possible behaviors of the hybrid system. Tools such as KeYmaera X implement dL using a theorem-proving approach, enabling verification of correctness properties of hybrid programs \cite{10.1007/978-3-319-21401-6_36}. For example, "if the system starts safely, it will never violate the safety condition." This deductive verification has been successfully applied to complex hybrid systems. It has been used to verify industrial-scale cyber-physical systems, such as railway control systems, air traffic collision avoidance systems, and autonomous driving controllers \cite{Platzer18}.

Consider a production line in which an oven heats a product. An operations engineer asks: \textit{"Along the intended heat-up path of the process, at which point does the first mode switch occur, which button triggers it, and which technician is responsible for the device?"} The first half of this question is about \textit{verified dynamic behavior} whether the intended transitions are actually realizable under the oven's heating ODE. The second half is about \textit{static metadata} such as buttons, devices, and responsibilities that live naturally in an RDF graph. No existing tool answers both halves of the query in one query.

Thus, our contribution is to bridge the gap between RDF and formal verification for hybrid systems. We propose a novel approach that starts with an RDF-based knowledge model of a cyber-physical process, automatically derives a formal model suitable for verification in dL, and integrates dL reasoning results into RDF reasoning and querying. In summary, our contributions are as follows:

\begin{itemize}
    \item \textbf{RDF-based representation of hybrid systems:} We introduce a formal representation of hybrid systems in RDF, capturing continuous dynamics (ODEs) and SHACL-based state definition in RDF.
    \item \textbf{dL formula generation and verification:} We present the semantics of this representation by providing a translation from the RDF/SHACL hybrid systems representation to differential dynamic logic (dL).
    \item \textbf{RDFdL framework:} We provide the RDFdL framework using Apache Jena for reasoning and querying, and KeYmaera X for dL proving.
\end{itemize}

The remainder of the paper is organized as follows. Section~\ref{sec:motivating-example} introduces the motivating example and problem setting in detail. Section~\ref{sec:preliminaries} recalls the required background on RDF, SHACL, SPARQL, hybrid systems, and dL. Section~\ref{section:formal_method} establishes formal guarantees and discusses expressiveness and complexity. Section~\ref{sec:approach} describes the implementation. Section~\ref{sec:Evaluation} reports the evaluation. Section~\ref{sec:Related_work} discusses related work, and Section~\ref{sec:Conclusion} concludes.

\section{Preliminaries}
\label{sec:preliminaries}

This section introduces the background needed in the rest of the paper. We first recall the role of RDF, SHACL, and SPARQL in RDFdL. We then introduce the fragment of hybrid systems and differential dynamic logic (dL) used in this paper. Finally, we clarify the distinction between concrete dL states and RDFdL state regions, because this distinction is central to our graph-based representation.

\subsection{RDF, SHACL, and SPARQL}
We briefly introduce the Semantic Web technologies we apply: RDF, SHACL, SPARQL, and their roles in our approach.
We represent static and dynamic information about the hybrid system as RDF data, use SHACL to define state definitions, and SPARQL to query the data, while considering entailed data about state transitions from dL.

\paragraph{RDF (Resource Description Framework).} RDF\footnote{\url{https://www.w3.org/TR/rdf11-concepts/}} is a W3C Recommendation for a data model that encodes binary predicates in a labeled directed graph. An RDF graph is defined as a set of \emph{triples}, each triple in the form $\langle subject,\ predicate,\ object\rangle$.
In RDF, objects can be URIs\footnote{\url{http://www.ietf.org/rfc/rfc3986} abbreviated as CURIEs (see \url{http://www.w3.org/TR/curie}) according to the practices encoded in \url{https://prefix.cc/}. On top, the empty prefix \texttt{:} denotes our anonymized vocabulary "\url{http://anonymous.example/rdfdl/vocabulary\#}", and \texttt{ex:} examples "\url{http://example.org/\#}".} global resource identifiers, blank nodes (document-scoped resource identifiers), or literals (data values); subjects can be URIs or blank nodes; predicates can only be URIs.
In the paper, we use the Turtle syntax\footnote{\url{https://www.w3.org/TR/turtle/}} for RDF.

 In RDFdL, RDF is used to represent devices, variables, modes, symbolic state regions, ODE records, candidate transitions, verified transitions, proof artifacts, and ordinary metadata.

\paragraph{SHACL (Shapes Constraint Language).} SHACL\footnote{\url{https://www.w3.org/TR/shacl/}} is a W3C Recommendation for a constraint language for RDF data. It allows defining \emph{shapes}, i.\,e.\ conditions that can be used to validate RDF graphs. SHACL shapes can be encoded in RDF. In our work, we use SHACL to express cardinality constraints on a node's incoming/outgoing edges and value range constraints on edges' ends. We use SHACL to encode the states we define and their value ranges.

In RDFdL, SHACL is used to validate real-time input state. It validates observations and well-formed RDF descriptions. It does not define the semantics of ODEs, continuous evolution, or dL formulas. The semantics of hybrid-system behavior are given by the dL proof obligations introduced below. For example, SHACL may validate that an observation with temperature \(185\) and mode \(\mathsf{Off}\) conforms to the shape associated with a high-temperature off-region. This validation identifies the symbolic region to which the observation belongs. It does not prove that the system can safely reach that region. dL proof obligations handle reachability and safety.

\paragraph{SPARQL (SPARQL Protocol and Query Language).} In the SPARQL\footnote{\url{https://www.w3.org/TR/sparql11-query/}} query language for RDF, a W3C recommendation, we can define Basic Graph Patterns (BGPs) that need to be matched on RDF graphs to extract specific information. BGPs consist of triple patterns, i.\,e.\ triples where in each position, a variable may appear. FILTERs can be applied to the thus-gained results. On top, SPARQL offers so-called property paths, using which arbitrary sequences of properties can be queried. We use SPARQL to formulate queries about both the static and the dynamic parts. Property paths are especially useful to formulate reachability queries on sequences of states.

In our approach, we can also use SPARQL to fetch basic graph patterns to static facts (device types, wiring, and mode records). and to traverse materialized \texttt{:next} and \texttt{:modeChange} edges, thereby answering reachability queries such as “\emph{find happiness path from raw to finished} and querying the verified transition graph after dL proof results have been materialized.

\paragraph{Entailment in SPARQL and Apache Jena}
SPARQL can match BGPs on RDF graphs to derive SPARQL query results. To also consider results that can be derived by applying formal reasoning on the RDF graphs, multiple options are applied in practice: The SPARQL Entailment W3C Recommendation\footnote{\url{https://www.w3.org/TR/sparql11-entailment/}} defines how BGP matching can be extended to consider entailment regimes such as RDFS and OWL. However, the evaluation of property paths is not defined based on BGP matching; thus, the reasoning results are not available in property paths of arbitrary length~\cite{Glimm:13:SER}—and we require such property paths for reachability queries.
Reasoning in Apache Jena, however, is implemented based on a RETE engine or a tabled Datalog engine or a combination of both in a way that queries are executed over a combination of the original data and the inferred data, where both sources of data are treated the same~\cite{jena-inference}, such that property path queries with arbitrary length can indeed be evaluated also over inferred triples.

\subsection{Differential Dynamic Logic}

We use differential dynamic logic (dL) in the paper of Platzer~\cite{Platzer18}. Differential dynamic logic is a logic for specifying and verifying hybrid systems, i.e., systems whose behavior combines discrete transitions with continuous evolution along differential equations. In dL, system behavior is represented by \emph{hybrid programs}, and properties of such programs are stated as logical formulas.

Let \(\mathcal{V}\) be a finite set of variables. A state \(\nu\) is a valuation that assigns a value to each variable in \(\mathcal{V}\). For the continuous variables used in this paper, the values are real numbers. We write \(\nu(x)\) for the value of the variable \(x\) in state \(\nu\).

The fragment of hybrid programs used in this paper is generated by the following grammar:
\[
\alpha,\beta ::= x := \theta
\mid ?\phi
\mid x_1'=\theta_1,\ldots,x_n'=\theta_n \ \&\ Q
\mid \alpha;\beta
\mid \alpha \cup \beta
\mid \alpha^\ast .
\]
Here, \(x := \theta\) is a discrete assignment, \(?\phi\) is a test that may proceed only if \(\phi\) holds, \(x_1'=\theta_1,\ldots,x_n'=\theta_n \ \&\ Q\) is continuous evolution along an ODE system within the evolution domain \(Q\), \(\alpha;\beta\) is sequential composition, \(\alpha \cup \beta\) is nondeterministic choice, and \(\alpha^\ast\) is finite repetition.

The semantics of a hybrid program \(\alpha\) is a binary transition relation
\[
\llbracket \alpha \rrbracket \subseteq \mathcal{S} \times \mathcal{S},
\]
where \(\mathcal{S}\) is the set of states. If
\[
(\nu,\omega) \in \llbracket \alpha \rrbracket,
\]
then there is an execution of \(\alpha\) that starts in state \(\nu\) and terminates in state \(\omega\). Thus, in this paper, an execution of a hybrid program is understood semantically as a transition between an initial state and a final state in the relation denoted by the program.

The dL formulas used in this paper are generated by:
\[
\phi,\psi ::= \theta_1 \sim \theta_2
\mid \neg \phi
\mid \phi \wedge \psi
\mid \phi \vee \psi
\mid \phi \rightarrow \psi
\mid \forall x\,\phi
\mid \exists x\,\phi
\mid [\alpha]\phi
\mid \langle \alpha\rangle\phi ,
\]
where \(\sim \in \{=,\neq,<,\leq,>,\geq\}\), \(\theta_1,\theta_2\) are arithmetic terms, and \(\alpha\) is a hybrid program.

The modal formulas are interpreted over the transition relation of the hybrid program:
\[
\nu \models [\alpha]\phi
\quad\text{iff}\quad
\text{for all } \omega \text{ with } (\nu,\omega)\in\llbracket\alpha\rrbracket,
\ \omega \models \phi,
\]
and
\[
\nu \models \langle\alpha\rangle\phi
\quad\text{iff}\quad
\text{there exists } \omega \text{ with } (\nu,\omega)\in\llbracket\alpha\rrbracket
\text{ such that } \omega \models \phi.
\]

Thus, \([\alpha]\phi\) expresses a safety-style property: every execution of \(\alpha\) ends in a state satisfying \(\phi\). By contrast, \(\langle\alpha\rangle\phi\) expresses a reachability-style property: some execution of \(\alpha\) reaches a state satisfying \(\phi\). RDFdL uses these modalities to generate proof obligations for candidate transitions between RDF state regions.

\begin{table}[htb]
  \centering
  \small
  \setlength{\tabcolsep}{6pt}
  \renewcommand{\arraystretch}{0.9}
  \begin{tabularx}{\linewidth}{@{} lX @{}}
    \toprule
    \multicolumn{2}{c}{\textbf{dL Formulas ($\varphi,\psi$)}}\\
    \midrule
    Syntax    & Meaning \\
    \midrule
    $[\alpha]\varphi$            & After all runs of $\alpha$, $\varphi$ holds (safety) \\
    $\langle\alpha\rangle\varphi$& Some run of $\alpha$ reaches a state where $\varphi$ holds (liveness) \\
    $\varphi\wedge\psi$          & Conjunction (and) \\
    $\varphi\vee\psi$            & Disjunction (or) \\
    $\varphi\to\psi$             & Implication \\
    $\varphi\leftrightarrow\psi$ & Biimplication (equivalence) \\
    \midrule[\heavyrulewidth]
    \multicolumn{2}{c}{\textbf{Hybrid Programs ($\alpha,\beta$)}}\\
    \midrule
    Syntax    & Meaning \\
    \midrule
    $\alpha;\beta$        & Sequential: do $\alpha$ then $\beta$ \\
    $\alpha\cup\beta$     & Choice: execute either $\alpha$ or $\beta$ \\
    $x := t$              & Discrete assignment: set $x$ to the value of $t$ \\
    $\{x'=t,y'=s\;\&\;Q\}$& Continuous evolution: $\dot x=t,\dot y=s$ within domain $Q$ \\
    \bottomrule
  \end{tabularx}
  \caption{Syntax of dL formulas and hybrid programs}
  \label{tab:dl-combined}
\end{table}

Table \ref{tab:dl-combined} shows \textbf{parts} of the syntax of Differential Dynamic Logic (dL) formulas and hybrid programs separately. The key semantics of the dL modalities can be stated as follows \cite{Platzer18}:
the box modality $[\alpha]\varphi$ expresses that \emph{every} execution of the hybrid program \(\alpha\) terminates in a state satisfying \(\varphi\). Dually, diamond modality \(\langle\alpha\rangle\varphi\) means that there exists \emph{some} execution of \(\alpha\) which ends in a state satisfying \(\varphi\).  Here, \(\alpha\) is the hybrid program describing the system's combined discrete and continuous behavior. For example, one can specify a reachability goal as $\varphi_1 \to \langle \alpha \rangle \varphi_2$, meaning “for any initial state, if the state satisfies $\varphi_1$, there exists some execution of $\alpha$ that eventually reaches a state satisfying $\varphi_2$”.

\textbf{In our setting, the primary objective is a reachability property}: we want to verify that from the set of initial states characterized by $\varphi_1$the system, it can eventually reach a target region $\varphi_2$ via hybrid dynamics $\alpha$. In dL, a continuous evolution is written as a hybrid program of the form $\{x' = f(x) \,\&\, Q(x)\}$, where $x$ is the evolving variable such as temperature, pressure, liquid level, etc. $f(x)$ is the ODE of $x$,  $Q(x)$ is the \emph{evolution domain} constraining the states in which the ODE is allowed to evolve. More explanations are in the following sections.

However, directly verifying a formula with the diamond modality ($\langle \alpha \rangle \varphi_2$) is challenging in practice because the theorem prover KeYmaera X provides strong support for safety properties ($[\cdot]$formulas) but not for liveness properties in diamond form. To navigate this limitation, we express the reachability statement (the existence of a trajectory) in dL using a formula $[\cdot]$ that KeYmaera X can directly handle. More details are shown in section \ref{section:formal_method}.

\subsection{Hybrid Systems and Terminology}

We introduce the basic terminology for hybrid systems. A hybrid system is a system with both discrete and continuous dynamics. A \textbf{hybrid system} is a dynamical system that exhibits both continuous evolution (described by differential equations) and discrete transitions (instantaneous jumps or mode switches). In this paper, the continuous part is given by ODEs over real-valued variables, while the discrete part is given by mode changes, assignments, and guards. We use hybrid programs as the formal representation of such systems. A device mode determines which ODE system is active, and a guard determines when a discrete mode change is admissible. 

For example, in an oven model, the continuous variable \(T\) denotes temperature. The mode \(\mathsf{On}\) is associated with a heating ODE, while the mode \(\mathsf{Off}\) is associated with a cooling ODE. A mode-change guard specifies when the oven may switch between modes. RDFdL represents such modes, variables, ODEs, and guards in RDF and translates them into hybrid programs. Other key terms are introduced below:\medskip

\begin{description}
  \item[\textbf{State.}] A \emph{state} of a hybrid system represents a value or a range of the system at a given time. It includes the values of all continuous variables (e.g., temperatures, velocities) and the system's current discrete mode. In other words, a state provides a complete description of the system’s configuration at an instant. A state in RDFdL is, however, a range in the state space. We distinguish concrete dL states from RDFdL state regions. A dL state is a valuation of variables. An RDFdL state region is an RDF resource denoting a formula over such valuations. For example, the RDF resource \(ex:s_{21}\) may denote the formula \(180 \leq T \leq 200 \wedge mode=\mathsf{On}\). Thus, RDFdL state regions are symbolic abstractions of sets of dL states, not individual dL states.
  
  \item[\textbf{Mode.}] A \emph{mode} (also called a discrete mode or control location) is a discrete condition or operational regime in which the system can operate. The mode determines which continuous dynamics (ODEs) govern the system. For example, an oven might have two modes: \textsf{On} (heating) and \textsf{Off} (cooling). Each mode is associated with different behavior (e.g., heating causes the temperature to rise according to a certain ODE, while in the Off mode, the temperature might cool down).
  
  \item[\textbf{Continuous Transition.}] A continuous transition refers to the system’s evolution over a period of time according to an ODE. In a given mode, the continuous state variables change continuously (e.g., temperature rises over time). This continuous change is modeled by differential equations $\dot{x} = f(x)$, and is often constrained by a \emph{Domain Constraint}. A continuous transition represents a smooth trajectory of states over time.
  
  \item[\textbf{Discrete Transition.}] A discrete transition is an instantaneous change that updates the system state, typically switching modes or changing certain state variables abruptly. Discrete transitions can be thought of as “jumps” or events, such as a controller turning the oven from Off to On or a sudden reset of a variable. In a hybrid program, discrete transitions are represented by statements that cause immediate changes, such as assignments (e.g., $x := \theta$)), and tests or conditionals that cause immediate choices.

  \item[\textbf{ODE.}] An \emph{ODE} (ordinary differential equation) describes the continuous dynamics of the system by specifying the rate of change of continuous variables. For example, $\dot{x} = f(x)$ is an ODE that describes how $x$ changes over time as a function of its current state. In hybrid systems, each mode typically has an associated ODE (or system of ODEs) governing the continuous evolution while the system remains in that mode. Solutions of ODEs define continuous trajectories for the state variables.
  
  \item[\textbf{Domain Constraint.}] A \emph{domain constraint} (or evolution domain) is a logical condition that restricts the region within which continuous evolution can occur. It must hold during a continuous transition. For instance, a domain constraint $Q(x)$ might require that a variable $x$ stay within a safe range while the ODE $\dot{x}=f(x)$ evolves. If the trajectory approaches a boundary that $Q$ would be violated, the continuous evolution must stop at or before that point. Domain constraints thus enforce safety conditions or physical limits.
\end{description}\medskip

\section{Motivating Example and Problem Setting}
\label{sec:motivating-example}

This section introduces the problem setting through a small oven example. The example is intentionally simple. Its role is not to demonstrate an industrial-scale controller but to make clear what information is available before RDFdL is applied, what kind of dynamic behavior has to be verified, and why verified dynamic behavior should become queryable as graph data.

\subsection{Running example: an oven process}

Consider an oven that is used in a production process to heat a product. The oven has two operating modes, \(\mathsf{On}\) and \(\mathsf{Off}\), and its temperature is represented by a continuous variable \(T\). When the oven is on, the temperature increases according to heating dynamics. When the oven is off, the temperature decreases according to cooling dynamics. The process is required to remain within a safe operating range; in the running example, we use \(T \leq 200\) as the safety bound.

The process also has ordinary metadata. The oven is a device in a production setting. It may be controlled by a button, belong to a process unit, be associated with a product batch, and be maintained by a responsible technician. This information is naturally represented as RDF graph data. Such metadata is not part of the continuous dynamics, but it is essential for answering operational questions about the process, for example, which technician is responsible for the device involved in a verified mode switch.

The dynamic behavior is represented using symbolic \emph{state regions}. A state region is not a single concrete temperature value. It describes a set of possible physical states by combining constraints over continuous variables with discrete mode information. In the oven example, we use four regions:

\[
\begin{array}{lll}
s_{11}: & T < 180, & mode=\mathsf{Off}, \\
s_{12}: & T < 180, & mode=\mathsf{On}, \\
s_{21}: & 180 \leq T \leq 200, & mode=\mathsf{On}, \\
s_{22}: & 180 \leq T \leq 200, & mode=\mathsf{Off}.
\end{array}
\]

These regions form a finite abstraction of the continuous temperature space. The boundary at \(T=180\) separates the low-temperature and high-temperature regions used by the controller, while \(T=200\) is the safety boundary. The same temperature range may occur under different modes: \(s_{11}\) and \(s_{12}\) both describe \(T < 180\), but they differ in whether the oven is off or on.

Note that the low-temperature regions are bounded strictly (\(T < 180\)), whereas the high-temperature regions include the boundary (\(180 \leq T \leq 200\)). The temperature ranges of \(s_{12}\) and \(s_{21}\) are therefore disjoint, so that, under a fixed mode, \emph{leaving} the low-temperature range and \emph{entering} the high-temperature range coincide. This disjointness is not cosmetic: the verification conditions of Section~\ref{sec:translation} rely on it, and RDFdL checks it mechanically before generating proof obligations, so that regions with overlapping boundaries are detected as modeling errors rather than silently producing unsound conclusions.

A natural candidate process path is:
\[
s_{11}
\;\xrightarrow{\mathsf{modeChange}}\;
s_{12}
\;\xrightarrow{\mathsf{next}}\;
s_{21}
\;\xrightarrow{\mathsf{modeChange}}\;
s_{22}.
\]

This path says that the oven starts in a low-temperature off region, is switched on, heats until it reaches the high-temperature region, and is then switched off. However, at this point, the path is only intended behavior. Its presence in the RDF graph does not by itself establish that the continuous transition from \(s_{12}\) to \(s_{21}\) is reachable under the heating ODE, nor that the mode changes are admissible under the modeled guards.

\paragraph*{Properties to be verified.}
For this candidate path, RDFdL has to establish three kinds of properties. First, for each mode-change edge, \((s_{11},s_{12})\) and \((s_{21},s_{22})\), the guards modeled for the oven admit the mode switch, and the target region is consistent with the new mode. Second, for the continuous edge \((s_{12},s_{21})\), that the high-temperature region \(s_{21}\) is reachable from \(s_{12}\) under the heating ODE while the evolution respects the domain constraint \(T \leq 200\). Third, globally, that every transition materialized in the verified graph preserves the safety invariant \(T \leq 200\). Section~\ref{sec:translation} turns each of these informal requirements into a precise dL proof obligation.

This is where RDFdL is needed. RDFdL distinguishes candidate transitions from verified transitions. Candidate transitions are represented as graph data, but they are not treated as verified dynamic facts. RDFdL checks them by generating dL proof obligations.

If the corresponding obligation is discharged with the required verdict (made precise in Section~\ref{sec:translation}), the transition becomes part of the verified transition graph as an entailed triple and can be queried together with the original RDF metadata.

SHACL has a separate role in this example. It is used to validate runtime observations against the shapes associated with state regions. For instance, an observation saying that the oven is currently off and has temperature \(T=185\) can be validated against the shape associated with \(s_{22}\). This validation identifies the current symbolic region. It does not prove that \(s_{22}\) is reachable; reachability and safety are handled by dL proof obligations.

The complete RDF model, SHACL validation shapes, generated dL proof obligations, and verification results for the oven are reported later as part of the evaluation. In this section, the oven only serves to introduce the problem.

\subsection{Problem setting}

The running example can be generalized as follows. RDFdL assumes an RDF description of a cyber-physical process. The graph contains ordinary metadata, such as devices, sensors, operators, products, and process units. It also contains a finite abstraction of the dynamic behavior, including state regions, modes, variables, ODE records, guards, and candidate transitions.

A state region is represented as an RDF resource, but it denotes a mathematical condition over continuous variables and discrete modes. For example, the region \(s_{12}\) denotes the condition

\[
T < 180 \wedge mode=\mathsf{On}.
\]
RDFdL therefore distinguishes the graph node that names a region from the formula denoted by that region.

A candidate transition is an intended transition between two state regions. It may come from a state diagram, a simulation model, a simulation trace, an engineering design, or a manually specified process plan. RDFdL does not synthesize such candidate transitions from scratch. Its purpose is to verify whether the proposed transitions are justified by the hybrid-system model and then expose the verified results as graph data.

For a continuous candidate transition, RDFdL checks whether the target region is reachable from the source region under the ODE associated with the current mode and within the specified evolution domain. For a mode-change candidate transition, RDFdL checks whether the relevant guard conditions allow the change of mode and whether the target region is consistent with the new mode. These checks are expressed as dL proof obligations and delegated to a dL theorem prover.

The problem addressed in this paper is therefore:

\begin{quote}
Given an RDF description of a cyber-physical process with symbolic state regions, ODE records, guards, metadata, and candidate transitions, construct a verified RDF transition graph in which dynamic transitions are materialized only when justified by dL proof obligations, and make these verified transitions queryable together with the original graph metadata.
\end{quote}

This problem setting explains why neither RDF-only nor dL-only reasoning is sufficient. RDF-only reasoning can follow metadata links and candidate paths, but it cannot prove ODE reachability or safety. dL-only verification can prove hybrid-system properties, but it does not provide graph-level integration with metadata, provenance, and SPARQL queries. RDFdL connects the two by turning verified hybrid behavior into graph data.

\section{Approach}
\label{sec:approach}

In this section, we give an overview of the roles involved and the core
components of our approach, along with their interaction when applied; see
Figure~\ref{fig:Flowchart_rdf}. We then describe our data modeling as the architectural core of RDFdL and explain how dL verification is embedded in RDF inference so that verified transitions become \emph{entailed} triples.

\subsection{Overview}

The raw inputs to our approach are (i)~a simulation model of the devices,
exported as a Functional Mock-up Unit (FMU) together with simulation traces, and (ii)~metadata authored by engineers. The state diagram over symbolic state regions is either supplied directly by the engineer or constructed by our tooling from the FMU variable metadata and trace-derived operating ranges; in both cases, it enters RDFdL as ordinary graph data\footnote{Tools such as OpenModelica can export explicit \cite{openmodelica_solve} or flattened \cite{modelica_dae_export} models. We provide code to extract RDF describing the ODEs from such exports.}. In our implementation, the pipeline reads the Functional Mock-up Interface (FMI) \texttt{modelDescription.xml} together with simulation traces (CSV) to obtain variable metadata and typical operating ranges\footnote{The FMI standard is used to export simulation models as Functional Mock-up Units (FMUs); standard FMI tooling produces \texttt{modelDescription.xml} and runs FMU simulations; see the FMI~2.0 specification and reference implementations~\cite{fmi2spec,referencefmus,fmpy}.}. In a state diagram, each state region consists of a numerical range for each physical variable (e.g., temperature $T$, pressure $P$, tank level $h_1$) and a configuration of the involved devices' modes, for example, whether they are
on or off. In our oven example, a state region might be "$T<180,\ \text{oven}=\textsf{On}$" to model the oven heating but remaining below the temperature for the Maillard reaction, which is important when making cake. State regions and their value ranges are represented as SHACL constraints. Next to the simulation models and the state diagram, metadata and master data about the involved devices, their parameters, and the setup are recorded in an RDF graph, e.\,g.\ by the engineers who designed the device, the manufacturer, or the engineers who set up the device on a shop floor. In our example, this may include the device's make and model, how the buttons are mapped to mode changes, the physical layout of the shop floor, and the go-to technician for technical problems.

A distinguishing design decision of RDFdL is \emph{where} verification
happens. A loosely coupled design would verify offline and import the results
as ordinary triples; the graph would then merely \emph{store} verification
output, and nothing would tie the stored triples to the proofs that justify
them. RDFdL instead registers the dL prover inside the RDF inference
machinery: the transition predicates: \texttt{:next} and \texttt{:modeChange}
are \emph{defined} by inference rules whose evaluation triggers dL proofs
(Section~\ref{subsec:entailment}). As a consequence, a verified transition
holds in the inference model if and only if the corresponding proof obligation is discharged with the required verdict by construction, not by convention. To SPARQL, entailed transitions are indistinguishable from asserted triples; in particular, property-path queries range over verified behavior (cf.\ the discussion of entailment and property paths in
Section~\ref{sec:preliminaries}). A SPARQL engine can thus retrieve static
information about devices, configurations, products, and fixed processes and, in the same query, dynamic information about paths over the state diagram, e.g., whether there exists a verified path between a certain state region and the successful end of the process or which state regions lie along a "happy path." All this is shown in Figure~\ref{fig:Flowchart_rdf}.

\begin{figure}[t]
  \centering
\includegraphics[width=1\linewidth]{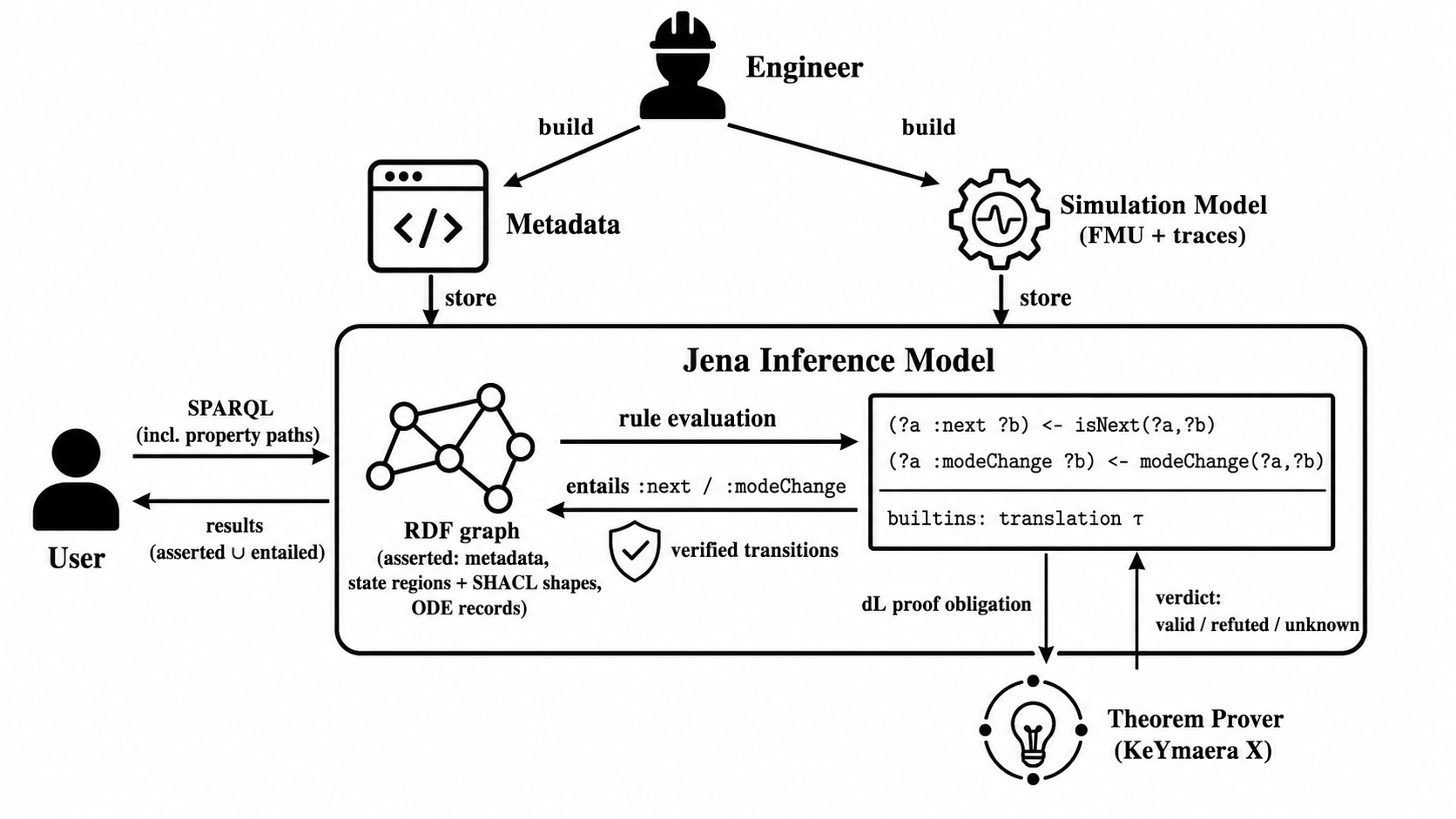}
  \caption{RDFdL architecture. The transition predicates \texttt{:next} and
\texttt{:modeChange} are defined by Jena inference rules whose builtins
invoke the KeYmaera~X prover; verified transitions are therefore
\emph{entailed} triples, visible to SPARQL queries, including property
paths alongside asserted data.}
  \label{fig:Flowchart_rdf} 
\end{figure}

\subsection{Data Modeling Ontology Design: Devices, Modes, and ODEs}
\label{subsec:ontology}

We develop an RDF vocabulary to formally describe the dynamic part of the system; its main classes and properties are shown in Figure~\ref{fig:Ontology}. The vocabulary is published at \url{https://purl.archive.org/rdfdl/vocab}\footnote{Resolvable as Turtle.}, and all listings in this paper use it. Following linked-data practice, the vocabulary imports and reuses existing terms wherever possible, and for the dL-specific part: Devices are modeled as \texttt{ssn:System} (SSN), state regions, and workflow structure reuse are \texttt{wild:State} and \texttt{wild:WorkflowModel} from the WiLD workflow vocabulary~\cite{DBLP:conf/semweb/KaferH18}, actuators come from SOSA, agents and provenance from PROV, and ordinary master data organizations, processes, products, and operators reuse \texttt{schema.org} terms.

The dL-specific contains four classes and a dozen properties. A \texttt{rdfdl:ModeRecord} associates a device (\texttt{rdfdl:hasDevice}) and one of its modes (\texttt{rdfdl:hasMode}, range \texttt{rdfdl:Mode}) with the ODE governing the continuous evolution in that mode (\texttt{rdfdl:hasODE}, range \texttt{rdfdl:ODE}; Section~\ref{ssec:ODE}). Each state region links to the SHACL node shape carrying its numeric bounds via \texttt{rdfdl:hasShape} (Section~\ref{subsec:shacl-shapes}), and a \texttt{wild:WorkflowModel} designates its \texttt{rdfdl:initialState}. Continuous variables are first-class resources of type \texttt{rdfdl:Variable}; each carries a \texttt{rdfdl:symbol}, the unique identifier under which it occurs in the
mathematical expression strings of any ODE that declares it via \texttt{rdfdl:evolvingVariable}. This property is the formal link between variables as RDF resources (e.g., \texttt{ex:T}) and their occurrences inside expressions (e.g., \texttt{"x"}); see
Section~\ref{section:formal_method}.

Verified transitions are represented by \texttt{rdfdl:next} and
\texttt{rdfdl:modeChange}, both declared as subproperties of
\texttt{rdfdl:transition}; under RDFS entailment, a single property path
over \texttt{rdfdl:transition}, therefore, traverses both kinds of verified
edges. Finally, the vocabulary is self-describing: it ships SHACL shapes for its own terms, e.g., \texttt{rdfdl:ODEShape} requires every ODE record to carry exactly one derivative, one starting and one ending condition, one
evolution-domain constraint, and one evolving variable so that RDFdL
Inputs are themselves machine-verifiable before any verification is
attempted.

\begin{figure}[t]
    \centering
    \includegraphics[width=0.8\textwidth]{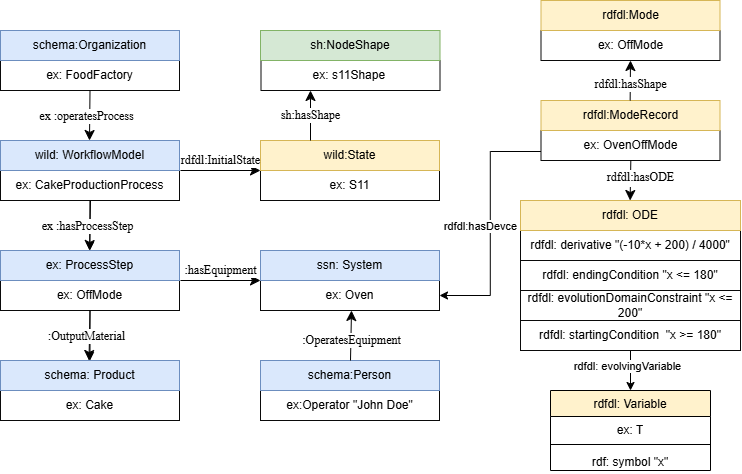}
    \caption{Main Classes and Properties of our Ontology.}
    \label{fig:Ontology} 
\end{figure}

\subsection{Integrating ODE Models}

Cyber‑physical behavior is represented in RDF by attaching \emph{ordinary differential equations (ODEs)} to the device modes. Each \texttt{:ModeRecord} links the discrete controller state to a resource of type \texttt{:ODE}, characterized using the following:
\begin{itemize}
  \item \texttt{rdfdl:derivative} \(\dot{x}=f(x)\),
  \item \texttt{rdfdl:starting/endingCondition}: guards that decide when the mode may start or must stop;
  \item \texttt{rdfdl:evolutionDomainConstraint}: an invariant $Q$ that must hold during the continuous evolution;
  \item \texttt{rdfdl:evolvingVariable}: the state variable governed by the ODE.
\end{itemize}
This turns the knowledge graph into a \emph{hybrid system model}: triples select a discrete mode, while the attached ODE governs the continuous evolution between two state transitions. The RDF for our example is listed in the Appendix \ref{ssec:ODE}.

\paragraph*{Expression strings and variable alignment.}
The expression strings used in \texttt{rdfdl:derivative}, the guard conditions, and the evolution domain constraint follow a small arithmetic grammar given in Appendix~\ref{ssec:ODE}. The identifiers occurring in these strings are resolved against the \texttt{ssn:symbol} values of the variables that the ODE declares via \texttt{rdfdl:evolvingVariable}: Every identifier in an expression must equal the symbol of exactly one declared variable, and symbols must be unique per ODE. The RDFdL pipeline checks this contract mechanically before translation and rejects, with an error report, any ODE record whose expressions mention undeclared identifiers. Listing~\ref{lst:ode-rdf} in Appendix~\ref{ssec:ODE} shows the oven's \texttt{rdfdl:ModeRecord}: the variable\texttt{ex:T} carries \texttt{ssn:symbol "x"}, which is why the derivative string may legally mention \texttt{x}.

This turns the knowledge graph into a \emph{hybrid system model}: triples select a discrete mode, while the attached ODE governs the continuous evolution between two state regions. The RDF for our example is listed in Appendix~\ref{ssec:ODE}.

\subsection{Verification as Entailment}
\label{subsec:entailment}
 
RDFdL implements the transition predicates as inference rules of the Jena rule engine, which applies the dL prover for reasoning:
 
\begin{listing}[h]
  \scriptsize
  \captionsetup{skip=2pt}
  \caption{Inference rules defining the verified transition predicates.}
  \label{lst:rules}
  \begin{lstlisting}[frame=single, basicstyle=\ttfamily\scriptsize,
                     numbers=none, breaklines=true]
[rule1: (?a :next ?b)       <- (?a rdf:type :State),
                               (?b rdf:type :State),
                               isNext(?a, ?b) ]
[rule2: (?a :modeChange ?b) <- (?a rdf:type :State),
                               (?b rdf:type :State),
                               modeChange(?a, ?b) ]
  \end{lstlisting}
\end{listing}
 
When the rule engine evaluates the rule \isNext{} or \modeChange{} on a pair of state-region resources, it extracts the state predicates and dynamics of both regions from the graph (Section~\ref{subsec:translation}), and the corresponding dL proof obligation; then it submits to KeYmaera~X over its REST interface and succeeds if and only if the obligation is discharged with the verdict required by the decision rules of Section~\ref{subsec:verification}. Verdicts are memoized per state pair, so each obligation is proved at most once per session.
 
Because entailed and asserted triples are treated uniformly during query evaluation in Jena (Section~\ref{sec:preliminaries}), SPARQL property-path queries such as \texttt{ex:s0 (:next|:modeChange)+ ex:s4} range over \emph{verified} behavior. This answers the question of whether theorem-prover calls can be included in SPARQL query evaluation: in RDFdL, they are part of the entailment regime under which queries are answered.

\section{Extracting and Proving State Transitions}
\label{section:formal_method}
 
The verification pipeline has three automatic phases: \emph{(i)} \textbf{extraction} of dL specifications from RDF, \emph{(ii)} mechanical \textbf{proof} of those specifications with KeYmaera~X, and \emph{(iii)} \textbf{interpretation} of the prover's verdict by the decision rules that define the transition predicates.

\subsection{SHACL Shapes for State Regions and Runtime Validity}
\label{subsec:shacl-shapes}

SHACL shapes play two roles in RDFdL. First, they are the syntax in which the numeric bounds of state regions are recorded: the translation of Section~\ref{subsec:translation} reads the constraint components of a region's shape to construct its state predicate. Second, they are used, under standard W3C validation semantics, to classify runtime observations: an incoming observation is validated against the shapes of all state regions to identify the symbolic region it belongs to. The semantics of dynamics are given by the dL proof obligations.
 
We define a SHACL NodeShape with the expected sensor readings and actuator modes for every predefined state region in the model. Each NodeShape serves as a semantic contract, specifying precisely which conditions must hold in the RDF data for the system to be in that region. These shapes serve as a type-checking layer for dynamic data: any incoming real-time state must conform to one of these shapes to be recognized as a valid system state.
 
Each NodeShape uses SHACL property constraints, modeled as blank node entries under \texttt{sh:property}, to capture the requirements for each system parameter. In the oven model, the region \texttt{s11} is characterized by two conditions: the oven is in \texttt{ex:OffMode}, and the temperature is strictly below the threshold (\texttt{x < 180}). We implement these constraints in a shape called \texttt{s11Shape}.
 
In our case study, when a real-time state with \texttt{mode=ex:OffMode} and measured temperature \texttt{T=150} is observed, the SHACL validator accepts it as conforming to \texttt{s11Shape}. The listing below shows the generated real-time RDF instance used for Scenario~2.
 
\begin{listing}[h]
  \scriptsize
  \captionsetup{skip=2pt}
  \caption{RDF Example for Scenario 2 Real-Time State.}
  \label{lst:rdf-example_state_s2}
  \begin{lstlisting}[frame=single, basicstyle=\ttfamily\scriptsize,
                     numbers=none, breaklines=true]
 
ex:Oven a ex:Device ;
    ex:mode ex:OffMode ;
    ex:x 150 .
 
ex:realTimeState a ex:State ;
    ex:hasOven ex:Oven .
  \end{lstlisting}
\end{listing}
 
For this scenario, the validation result is:
\begin{itemize}
\item Matched shape: \texttt{s11Shape}
\item Violated shapes: \texttt{s12Shape}, \texttt{s21Shape}, \texttt{s22Shape}
\end{itemize}
 
This indicates that the observation is a valid instance of region \texttt{s11} and conforms to the expected "off, low-temperature" operational condition.
 
\subsection{From RDF/SHACL to dL Proof Obligations}
\label{subsec:translation}
 
For every candidate \emph{state transition} \((s_{1},s_{2})\) in the
knowledge graph, we are interested in the reachability obligation,
\[
   \underbrace{\varphi_{1}}_{\text{precondition}}
   \;\longrightarrow\;
   \bigl\langle\,
        \underbrace{\alpha}_{\text{hybrid program}}
   \,\bigr\rangle\!
   \underbrace{\varphi_{2}}_{\text{postcondition}},
\]
which asks whether there exists some execution of \(\alpha\) that starts in a state satisfying \(\varphi_{1}\) and eventually reaches a state satisfying \(\varphi_{2}\). In principle, a proof of this dL formula represents reachability.
 
However, KeYmaera~X provides much stronger automation for safety-style box formulas than for diamond formulas. We therefore verify the \emph{refutability} of the following box obligation instead:
\begin{equation}
  \varphi_{1}
  \;\longrightarrow\;
  \bigl[\,\alpha \;\&\; (\varphi_{1}\vee\varphi_{2})\bigr]\,
  \varphi_{1}.
  \label{eq:isnext1}
\end{equation}

This means "all executions of \(\alpha\) that evolve only in states satisfying \(\varphi_{1}\vee\varphi_{2}\) end in a state satisfying \(\varphi_{1}\)“. The execution of \(\alpha\) is restricted to the union of the source and target regions. If Formula~\eqref{eq:isnext1} is \emph{valid}, then no execution that stays inside \(\varphi_{1}\vee\varphi_{2}\) can move from \(\varphi_{1}\) into \(\varphi_{2}\). Conversely, if~\eqref{eq:isnext1} is \emph{refutable}, there exists an execution of \(\alpha\) that starts in \(\varphi_{1}\), stays within \(\varphi_{1}\vee\varphi_{2}\), and leaves \(\varphi_{1}\); by the
well-formedness condition below, which this execution must enter \(\varphi_{2}\). Thus, refuting~\eqref{eq:isnext1} witnesses the liveness property \(\varphi_{1} \rightarrow \langle \alpha \,\&\, (\varphi_{1}\vee\varphi_{2})\rangle\, \varphi_{2}\).
 
\begin{definition}[Well-formed candidate transition]\label{def:wellformed}
A candidate continuous transition \((s_1,s_2)\) with state predicates
\(\varphi_1,\varphi_2\) is \emph{well-formed} iff
(i)~\(m(s_1)=m(s_2)\);
(ii)~\(S(s_1)\neq S(s_2)\); and
(iii)~the first-order arithmetic sentence
\[
  \forall x\;\bigl( (\varphi_1 \vee \varphi_2) \wedge \neg\varphi_1
    \;\rightarrow\; \varphi_2 \bigr)
\]
is valid, i.e., within the envelope \(\varphi_1\vee\varphi_2\), leaving
\(\varphi_1\) entails entering \(\varphi_2\). Condition~(iii) holds in
particular whenever \(\varphi_1\) and \(\varphi_2\) are disjoint and their
union has no gap, as for the oven regions of
Section~\ref{sec:motivating-example}, whose strict boundary
(\(T<180\) versus \(180\le T\le 200\)) was chosen precisely for this purpose.
RDFdL discharges condition~(iii) automatically as a quantifier-elimination
obligation before generating~\eqref{eq:isnext1}; ill-formed candidates are
rejected and reported, never materialized.
\end{definition}
 
\paragraph{Step 1: SHACL $\Rightarrow$ state predicates \(\varphi\).}
 
The mapping of SHACL constraint components to dL literals is shown in
Table~\ref{tab:factmap}; the constant \(c\) is taken directly from the
literal of the corresponding SHACL constraint. Given a state URI, the
notation used in the mapping is shown in Table~\ref{tab:notation}.
 
\begin{table}[t]
\setlength{\textfloatsep}{4pt}
  \setlength{\intextsep}{4pt}
  \centering
  \small
  \setlength{\tabcolsep}{6pt}
  \renewcommand{\arraystretch}{0.9}
  \begin{tabular}{@{} l l @{}}
    \toprule
    \textbf{SHACL constraint}   & \textbf{dL literal} \\
    \midrule
    \texttt{sh:hasValue}        & \(x = c\) \\
    \texttt{sh:minInclusive}    & \(x \ge c\) \\
    \texttt{sh:maxInclusive}    & \(x \le c\) \\
    \texttt{sh:minExclusive}    & \(x > c\) \\
    \texttt{sh:maxExclusive}    & \(x < c\) \\
    \bottomrule
  \end{tabular}
  \caption{SHACL constraints and their dL correspondents.}
  \label{tab:factmap}
\end{table}
 
\begin{table}[t]
  \centering
    \begin{tabularx}{\columnwidth}{@{}lX@{}}
        \toprule
        \textbf{Symbol}         & \textbf{Meaning} \\
        \midrule
        $p$                     & URI of a numeric evolving variable, e.g.\ \texttt{ex:T} \\
        $f, v$                  & SHACL constraint components and their literal values \\
        $\theta_f$              & Comparison operator mapped from $f$ (Table~\ref{tab:factmap}) \\
        $\mathsf{mode}_d$       & Current mode of device $d$ \\
        $M_d$                   & Finite set of modes declared for device $d$ \\
        $\mathbf{f}$            & ODE derivatives (\texttt{:derivative}) \\
        $Q(\mathbf{x})$         & Evolution-domain constraint \\
        \bottomrule
      \end{tabularx}
  \caption{Notation used in the SHACL\,$\rightarrow$\,dL mapping.}
  \label{tab:notation}
\end{table}
\vspace{-4pt}
 
\begin{enumerate}
\item Read the region's \texttt{sh:NodeShape}.
\item Convert every numeric constraint \(\langle p,f,v\rangle\) into an
 
      atomic comparison \(p\;\theta_f\;v\) using Table~\ref{tab:factmap}.
\item Convert every \texttt{sh:hasValue} on \texttt{:mode} to a literal
     
      \(\mathsf{mode}_d = m\) with \(m \in M_d\).
\end{enumerate}
Conjoin all literals; the result is the \emph{state predicate}
\(\varphi_s\). For example, for the oven region \texttt{s11},

\(\varphi_{11} = T<180 \wedge \mathsf{mode}_{\mathit{Oven}}=\mathsf{Off}\).

The SHACL source is listed in Appendix~\ref{ssec:state_in_Shacl}.
 
\paragraph{Step 2: RDF $\Rightarrow$ hybrid program \(\alpha\).}
 
Follow the \texttt{:hasMode} link of the region to its unique
\texttt{:ModeRecord}; extract
 
\smallskip
\(\displaystyle
  \alpha\;=\;
  \bigl\{
     \dot{\mathbf x}=\mathbf f(\mathbf x)
     \;\&\;Q(\mathbf x)
  \bigr\},
\)
 
\smallskip
\noindent
where \(\mathbf f\) is the vector of derivatives defined in
Section~\ref{subsec:ode-models} and \(Q\) is the evolution domain. If a mode
drives multiple variables -- as in the drum-boiler case, where the flowing
mode governs both steam temperature \(T_S\) and pressure \(P_S\) -- all ODEs
are listed in the same evolution block, i.e., a single
\(\{\dots \& \dots\}\) statement, and their domain constraints are conjoined;
this respects the hybrid program syntax of
Section~\ref{sec:preliminaries}.
 
\paragraph{Step 3: Assemble the obligation.}
Set \(\varphi_{1}=\varphi_{s_{1}}\), \(\varphi_{2}=\varphi_{s_{2}}\), check
well-formedness (Definition~\ref{def:wellformed}), and build
Formula~\eqref{eq:isnext1}. Each parameter of the dL obligation can be traced
back to the input RDF/SHACL graph, allowing \textbf{explainable proofs}.
 
\smallskip
\emph{Mapping Example.}\;

For the oven, \(s_{11}\) is the region \(T<180\) with the oven off;
\(s_{12}\) is \(T<180\) with the oven on;
\(s_{21}\) is \(180\le T\le 200\) with the oven on; and
\(s_{22}\) is \(180\le T\le 200\) with the oven off.
 
Check \texttt{s12\,$\rightarrow$\,s21}:
\(\varphi_{12}: T<180\),\;
\(\alpha_{12}:\{\dot T = 0.605 - 0.0025\,(T-20)\;\&\;T\le200\}\),\;
\(\varphi_{21}: 180\le T\le200\), yielding the obligation
$\varphi_{12}\;\longrightarrow\;
 [\alpha_{12}\;\&\;(\varphi_{12}\lor\varphi_{21})]\,\varphi_{12}$.
 
\subsection{Verification in dL}
\label{subsec:verification}
 
We formalize transition rules that classify how the hybrid system performs
state transitions. Each state region is represented as an RDF node, encoding
the mode configuration of each device (e.g., heater, tank, etc.) along with
continuous numeric bounds (e.g., temperature ranges) for the physical
variables in that region. In this section, we present the \isNext{} and
\modeChange{} rules; these are exactly the builtins evaluated by the
inference rules of Section~\ref{subsec:entailment}.
 
\subsubsection{Rule: \isNext}
\label{subsub:isnext}
 
\begin{definition}[RDF Extraction]\label{def:state}
Let $D$ be the finite set of device resources and, for each $d\in D$, let
$M_d$ be the finite set of mode resources declared for $d$. For each state
region resource $s$, extract:
\[
\begin{aligned}
  &\textbf{Mode }m(s) = \bigl(m_{d}(s)\bigr)_{d\in D},
    \quad m_d(s)\in M_d,\\
  &\textbf{Region }S(s) =
    \{\,x\in\R^n \mid \underline x_i(s)\mathrel{\lhd_i} x_i
      \mathrel{\rhd_i}\overline x_i(s),\ 1\le i\le n\},
\end{aligned}
\]
where the bounds and the strictness of the comparisons
$\lhd_i,\rhd_i\in\{<,\le\}$ are read off the SHACL constraint components of
the shape associated with $s$ via Table~\ref{tab:factmap}. Then define the
predicate of $s$:
\[
  \varphi_s(x) \;=\; (x\in S(s)) \;\wedge\;
  \bigwedge_{d\in D}\bigl(\mode_d=m_d(s)\bigr).
\]
\end{definition}
 
\begin{definition}[\isNext]\label{def:isnext}
Given two state regions $s_{1},s_{2}$, let
\(\varphi_{1}=\varphi_{s_{1}}\) and \(\varphi_{2}=\varphi_{s_{2}}\).
Then \(\isNext(s_{1},s_{2})\) holds iff:
\begin{enumerate}
  \item $m(s_{1})=m(s_{2})$ \quad (modes of all devices are the same);
  \item $S(s_{1})\neq S(s_{2})$ \quad (the ranges differ); and
  \item \emph{Reachability within the envelope:} there exist a state
        \(\nu\models\varphi_1\) and an execution of the shared-mode dynamics
        \(\alpha\) that starts in \(\nu\), remains within
        \(S(s_1)\cup S(s_2)\) throughout, and ends in a state satisfying
        \(\varphi_2\).
\end{enumerate}
\end{definition}

\begin{remark}
Restricting condition~3 to the envelope \(S(s_1)\cup S(s_2)\) is deliberate:
\isNext{} models a \emph{direct} transition between adjacent regions.
Behavior that must pass through other regions is not a single edge; it is a
path, and paths are exactly what SPARQL property-path queries over the
verified transition graph recover.
\end{remark}
 
\paragraph{dL Verification Condition}
Let $m=m(s_{1})=m(s_{2})$ and extract from the RDF ODE records the hybrid
program
\[
  \alpha \;=\;\{\,\dot x = f_{m}(x)\;\&\;Q_{m}(x)\,\},
\]
where $Q_{m}(x)$ is the conjunction of all domain constraints for the mode

$m$ (for the oven, \(T\le200\)). The proof obligation is
Formula~\eqref{eq:isnext1}, instantiated with \(\varphi_1,\varphi_2\), and
$\alpha$.

\paragraph{dL Decision Rule}

For each obligation, the prover run yields one of three verdicts:
\begin{itemize}
  \item If \eqref{eq:isnext1} is \textbf{valid}, no trajectory within the
        envelope leads from $\varphi_{1}$ to $\varphi_{2}$, so
        \(\isNext(s_{1},s_{2})=\mathsf{false}\).
  \item If it is \textbf{refutable}, there is a counterexample trajectory
        that exits $\varphi_{1}$ while staying within
        $\varphi_{1}\vee\varphi_{2}$; by
        Definition~\ref{def:wellformed}(iii) it must enter $\varphi_{2}$, so
        \(\isNext(s_{1},s_{2})=\mathsf{true}\) and the triple is entailed.
  \item If the proof attempt is \textbf{inconclusive} (the automation
        neither closes the proof nor produces a refutation within its
        resource bounds), RDFdL conservatively does \emph{not} materialize
        the transition and records the obligation as an open proof artifact.
\end{itemize}
The refutation in the second case is genuine semantic refutation, not
negation-as-failure: KeYmaera~X closes the attempt with a concrete
counterexample trajectory, which is exactly the existential witness required
by Definition~\ref{def:isnext}.

\subsubsection{Example 1 (Oven: $s_{12}\!\to\!s_{21}$)}

\[
\begin{aligned}
  \varphi_{12} &:\; x<180,\\
  \alpha_{12} &:\;
      \{\,x' = 0.605-0.0025(x-20)\;\&\;x\le200\,\},\\
  \varphi_{21} &:\; 180\le x\le200,
\end{aligned}
\]
where $x$ is the oven temperature and the mode is
$\mode_{\textit{Oven}}=\mathsf{On}$ (heater on). The dynamics admit the
closed-form solutions \(x(t) = 262 - (262-x_0)\,e^{-0.0025\,t}\) for initial
value \(x_0\), which increases strictly monotonically towards the equilibrium
\(262\). Hence, from any \(x_0<180\), the trajectory crosses \(180\) in
finite time while still satisfying the domain constraint \(x\le200\); for
instance, from \(x_0=179\) the solution
\(x(t)=262-83\,e^{-0.0025\,t}\) reaches \(x=180\) at
\(t=\ln(83/82)/0.0025\approx 4.85\). When the obligation is submitted to the
proof engine, the prover accordingly returns such a counterexample
trajectory: the obligation is \emph{refutable},
\(\isNext(s_{12},s_{21})=\mathsf{true}\), and the following triple is
entailed:
 
\smallskip
\quad\quad\quad\texttt{ex:s12 :next ex:s21}.
 
\subsubsection{Rule: \modeChange}
\label{subsub:modechange}

\begin{definition}[\modeChange]\label{def:modechange}
For two state regions $s_{1},s_{2}$ with state predicates
$\varphi_{1},\varphi_{2}$, \(\modeChange(s_{1},s_{2})\) holds iff:
\begin{enumerate}
  \item $S(s_{1}) = S(s_{2})$ \quad (the ranges are the same);
  \item $m(s_{1}) \neq m(s_{2})$ \quad (at least one device mode flips; each
        such device is a flipped device $d^\star$);
  \item every flipped device $d^\star$ satisfies
        Formula~\eqref{eq:modechange-start} and
        Formula~\eqref{eq:modechange-link} below.
\end{enumerate}
\end{definition}
 
\paragraph{Per-device obligations}\label{def:mc-oblig}
Let $d^\star$ be a flipped device with old mode
$m_{\mathrm{old}}=m_{d^\star}(s_{1})$ and new mode
$m_{\mathrm{new}}=m_{d^\star}(s_{2})$. From the corresponding
\texttt{:ModeRecord}s we extract the starting condition
$SC_{m_{\mathrm{new}}}(x)$ of the new mode and the ending condition
$EC_{m_{\mathrm{old}}}(x)$ of the old mode. KeYmaera~X must show both
obligations valid:
\begin{align}
  \varphi_{1} &\;\longrightarrow\; SC_{m_{\mathrm{new}}}(x),
  \label{eq:modechange-start} \\[4pt]
  EC_{m_{\mathrm{old}}}(x) &\;\longleftrightarrow\; SC_{m_{\mathrm{new}}}(x).
  \label{eq:modechange-link}
\end{align}

Obligation~\eqref{eq:modechange-start} ensures the new mode may start from
anywhere in the source region; obligation~\eqref{eq:modechange-link} ensures
the old mode ends exactly where the new mode may begin, so the switch occurs
on a well-defined guard surface. If any flipped device fails either obligation, then
\(\modeChange(s_{1},s_{2})=\mathsf{false}\). As for \isNext{}, an
inconclusive prover run is treated conservatively: the transition is not
materialized.
 
\paragraph{Example $\,s_{11}\!\to\!s_{12}$}

Region $S(s_{11})\colon T<180$; flipped device: Oven
$(\mathsf{Off}\to\mathsf{On})$. From RDF we obtain
$SC_{\text{On}}\colon T\le180$ and $EC_{\text{Off}}\colon T\le180$.
\[
\begin{aligned}
\text{F-1}&:\ T<180\to T\le180\;(\text{valid})\\
\text{F-2}&:\ T\le180\leftrightarrow T\le180\;(\text{valid})
\end{aligned}
\]
Both obligations hold; therefore, the following triple is entailed:
\[
  \texttt{ex:s11}\;\texttt{ex:Oven\_OffMode\_to\_OnMode}\;\texttt{ex:s12}.
\]

\subsection{Guarantees}
\label{subsec:guarantees}
 
We now establish that the decision rules above are sound with respect to the
hybrid-system semantics, and we characterize the sense in which they are
incomplete.
 
\begin{lemma}[Continuous-edge soundness]\label{lem:isnext-sound}
Let \((s_1,s_2)\) be a well-formed candidate transition (Definition~\ref{def:wellformed}) with shared mode configuration $m$ and
\(\alpha=\{\dot x=f_m(x)\,\&\,Q_m(x)\}\). If Formula~\eqref{eq:isnext1} is refutable, then \(\isNext(s_1,s_2)\) holds in
the sense of Definition~\ref{def:isnext}.
\end{lemma}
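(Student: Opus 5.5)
We argue directly from the dL semantics of the box modality and unfold what refutability of Formula~\eqref{eq:isnext1} provides. Refutability means there is a state \(\nu\) with \(\nu\not\models \varphi_1 \rightarrow [\alpha \,\&\, (\varphi_1\vee\varphi_2)]\,\varphi_1\). Hence \(\nu\models\varphi_1\) and \(\nu\not\models[\alpha\,\&\,(\varphi_1\vee\varphi_2)]\,\varphi_1\). By the box clause of Section~\ref{sec:preliminaries}, there is then a state \(\omega\) with \((\nu,\omega)\in\llbracket \alpha\,\&\,(\varphi_1\vee\varphi_2)\rrbracket\) and \(\omega\not\models\varphi_1\). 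This pair \((\nu,\omega)\) is the witness we will reshape into the one required by Definition~\ref{def:isnext}.

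\textbf{Step 1: reading \(\alpha\,\&\,(\varphi_1\vee\varphi_2)\) as an ODE.} Since \(\alpha=\{\dot x=f_m(x)\,\&\,Q_m(x)\}\), we read \(\alpha\,\&\,(\varphi_1\vee\varphi_2)\) as the ODE \(\{\dot x=f_m(x)\,\&\,Q_m(x)\wedge(\varphi_1\vee\varphi_2)\}\). By the semantics of continuous evolution, \((\nu,\omega)\) in its relation means there is a solution \(\xi:[0,r]\to\mathcal S\) with \(\xi(0)=\nu\) and \(\xi(r)=\omega\), which only changes the evolving variables and satisfies \(Q_m\wedge(\varphi_1\vee\varphi_2)\) at every \(\zeta\in[0,r]\).

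\textbf{Step 2: staying in the envelope and keeping the mode.} The mode variables \(\mode_d\) are not evolving variables, so they stay constant along \(\xi\), equal to \(m=m(s_1)=m(s_2)\); this uses well-formedness~(i). Because every \(\xi(\zeta)\models\varphi_1\vee\varphi_2\), and \(\varphi_{s_i}\) is \(x\in S(s_i)\) conjoined with the mode literals (Definition~\ref{def:state}), the continuous part of \(\xi(\zeta)\) lies in \(S(s_1)\cup S(s_2)\) throughout. So \(\xi\) is an execution of the shared-mode dynamics \(\alpha\), it starts in \(\nu\models\varphi_1\), and it remains in the envelope. Each \(\xi(\zeta)\) satisfies \(Q_m\), so \(\xi\) is also a genuine run of \(\alpha\) itself.

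\textbf{Step 3: ending in \(\varphi_2\).} At the endpoint we have \(\omega\models(\varphi_1\vee\varphi_2)\wedge\neg\varphi_1\). Instantiating the valid sentence of Definition~\ref{def:wellformed}(iii) at the values of \(\omega\) gives \(\omega\models\varphi_2\). All three conditions of Definition~\ref{def:isnext} now hold: conditions~1 and~2 are well-formedness (i) and (ii), and condition~3 is witnessed by \(\nu\), \(\xi\), and \(\omega\). This proves \(\isNext(s_1,s_2)\).

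The main obstacle is Step~1, because the notation \(\alpha\,\&\,(\varphi_1\vee\varphi_2)\) is not part of the grammar of Section~\ref{sec:preliminaries}. We have to fix its meaning as strengthening the evolution domain to \(Q_m\wedge(\varphi_1\vee\varphi_2)\), and then rely on the standard fact that dL domain constraints hold along the whole trajectory, including both endpoints. A second, smaller point is that the quantifier \(\forall x\) in condition~(iii) must range over all evolving variables, and the mode literals must be treated as fixed. Once the domain-conjunction reading is adopted, both points are direct consequences of the semantics.
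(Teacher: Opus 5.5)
Your proof is correct and follows the same route as the paper. Unfold refutability to get $\nu\models\varphi_1$ and a run to some $\omega\not\models\varphi_1$, use that the evolution-domain constraint holds along the whole trajectory to stay within $S(s_1)\cup S(s_2)$, and apply well-formedness (iii) at $\omega$ to conclude $\omega\models\varphi_2$. You are somewhat more explicit than the paper: you fix the reading of $\alpha\,\&\,(\varphi_1\vee\varphi_2)$ as domain strengthening, note that the run is also a run of $\alpha$, and discharge conditions 1 and 2 of Definition~\ref{def:isnext} via well-formedness (i)--(ii), which the paper's proof leaves implicit.
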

\begin{proof}
Refutability of~\eqref{eq:isnext1} yields a state \(\nu\models\varphi_1\) and an execution \((\nu,\omega)\in\llbracket\alpha\,\&\,(\varphi_1\vee\varphi_2)\rrbracket\) with \(\omega\not\models\varphi_1\). By the semantics of the evolution domain constraints (Section~\ref{sec:preliminaries}), every state along this execution, in particular \(\omega\) satisfies \(\varphi_1\vee\varphi_2\); hence, the execution remains within \(S(s_1)\cup S(s_2)\) throughout, as required by Definition~\ref{def:isnext}(3). Moreover, \(\omega\models(\varphi_1\vee\varphi_2)\wedge\neg\varphi_1\), so by well-formedness condition~(iii), \(\omega\models\varphi_2\). Thus \(\nu\) and the execution witness Definition~\ref{def:isnext}(3).
\end{proof}
 
\begin{lemma}[Discrete-edge soundness]\label{lem:modechange-sound}
Let \((s_1,s_2)\) satisfy conditions~1 and~2 of
Definition~\ref{def:modechange}, and let every flipped device satisfy
Formulas~\eqref{eq:modechange-start} and~\eqref{eq:modechange-link}. Then the
instantaneous mode switch maps every state satisfying \(\varphi_1\) to a
state satisfying \(\varphi_2\).
\end{lemma}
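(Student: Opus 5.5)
The plan is to make ``the instantaneous mode switch'' precise as a hybrid program and then to check the two conjuncts of \(\varphi_2\) from Definition~\ref{def:state} one at a time. Let \(F\subseteq D\) be the set of flipped devices, i.e.\ those \(d^\star\) with \(m_{d^\star}(s_1)\neq m_{d^\star}(s_2)\). Condition~2 of Definition~\ref{def:modechange} guarantees \(F\neq\emptyset\). I would model the switch as a guarded discrete program that changes only mode variables:
\[
  \beta \;=\; ?\Bigl(\textstyle\bigwedge_{d^\star\in F} EC_{m_{d^\star}(s_1)}(x)\wedge SC_{m_{d^\star}(s_2)}(x)\Bigr);\ \mode_{d^\star_1}:=m_{d^\star_1}(s_2);\ \dots;\ \mode_{d^\star_k}:=m_{d^\star_k}(s_2).
\]
The claim then reads: for every \(\nu\models\varphi_1\), some \(\omega\) with \((\nu,\omega)\in\llbracket\beta\rrbracket\) exists, and every such \(\omega\) satisfies \(\varphi_2\). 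This amounts to showing \(\varphi_1\to\langle\beta\rangle\varphi_2\) and \(\varphi_1\to[\beta]\varphi_2\).

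\textbf{Admissibility.} Fix \(\nu\models\varphi_1\). Validity of Formula~\eqref{eq:modechange-start} gives \(\nu\models SC_{m_{\mathrm{new}}}(x)\) for each flipped device. Validity of Formula~\eqref{eq:modechange-link} then gives \(\nu\models EC_{m_{\mathrm{old}}}(x)\). The guards depend only on the continuous variables \(x\), which the test and the preceding conjuncts do not change, so the test in \(\beta\) passes at \(\nu\). Since the assignments are total, \(\beta\) has exactly one successor \(\omega\). This state coincides with \(\nu\) except that \(\omega(\mode_{d^\star})=m_{d^\star}(s_2)\) for every \(d^\star\in F\). I would also note explicitly that \eqref{eq:modechange-link} is what makes the old mode's ending condition hold, so the switch is consistent with both mode records and not just permitted by the new one.

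\textbf{Target region.} For the continuous part, \(\omega(x)=\nu(x)\in S(s_1)\), and condition~1 (\(S(s_1)=S(s_2)\)) gives \(\omega(x)\in S(s_2)\). For the modes, I split on \(d\in D\). If \(d\in F\), then \(\omega(\mode_d)=m_d(s_2)\) by the assignment. If \(d\notin F\), then \(\omega(\mode_d)=\nu(\mode_d)=m_d(s_1)=m_d(s_2)\), using \(\nu\models\varphi_1\) and the definition of \(F\). Conjoining the two parts gives \(\omega\models\varphi_2\).

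\textbf{Main obstacle.} The main difficulty is not computational; it is fixing the semantics of ``the instantaneous mode switch.'' The paper leaves this semantics implicit, and the lemma is only meaningful once it is settled that the switch touches only mode variables and is guarded by the extracted \(SC\)/\(EC\) conditions. I would state \(\beta\) explicitly as above. I would also remark that the guards are assumed to mention only the continuous variables declared via \texttt{rdfdl:evolvingVariable}, which the variable-alignment check ensures, so that the mode assignments cannot invalidate them. With that settled, the remaining steps are direct unfoldings of the assignment and test semantics.
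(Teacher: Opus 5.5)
Your proposal is correct and follows essentially the same argument as the paper: fix \(\nu\models\varphi_1\), note that the switch changes only the mode variables of flipped devices, get the continuous conjunct from \(S(s_1)=S(s_2)\), split devices into flipped and unflipped for the mode conjuncts, and use Formulas~\eqref{eq:modechange-start} and~\eqref{eq:modechange-link} for admissibility. The only difference is that you write the switch as the explicit guarded program \(\beta\), so admissibility becomes the \(\langle\beta\rangle\) half of the claim, whereas the paper leaves the switch informal.
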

\begin{proof}
Let \(\nu\models\varphi_1\). The switch update only the discrete mode
variables of the flipped devices and leave the continuous valuation
unchanged; call the resulting state \(\nu'\). Since
\(S(s_1)=S(s_2)\) and \(\nu\in S(s_1)\), we have \(\nu'\in S(s_2)\). For every
device $d$ that is not flipped, \(m_d(s_2)=m_d(s_1)\) and the mode is
unchanged; for every flipped device, the switch sets its mode to
\(m_{d^\star}(s_2)\) by construction. Hence
\(\nu'\models\varphi_2\). Formula~\eqref{eq:modechange-start} guarantees that
the switch is admissible from \(\nu\), because
\(\nu\models\varphi_1\rightarrow SC_{m_{\mathrm{new}}}\); and
Formula~\eqref{eq:modechange-link} guarantees that the ending condition of
the old mode holds exactly then, so the old mode may terminate at the switch
point.
\end{proof}
 
\begin{corollary}[Global path safety]\label{cor:path-safety}
Let \(G=(S,E)\) be the verified transition graph, where
\((s_i,s_j)\in E\) iff \(\isNext(s_i,s_j)\) or \(\modeChange(s_i,s_j)\) was
established by the decision rules. If every state region \(s\in S\) entails a
global invariant \(\mathrm{Inv}\), i.e.\ \(\varphi_s\models\mathrm{Inv}\) for
all \(s\), then every finite path in \(G\) preserves \(\mathrm{Inv}\).
\end{corollary}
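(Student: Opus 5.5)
The plan is to lift the edge-level results of Lemma~\ref{lem:isnext-sound} and Lemma~\ref{lem:modechange-sound} to paths by induction on path length, stated over concrete executions. First I fix what ``a path preserves $\mathrm{Inv}$'' means. A finite path $s_0 \to s_1 \to \dots \to s_k$ in $G$ is realized by a concatenation of executions. The $i$-th piece is either a continuous execution of the shared-mode dynamics witnessing Definition~\ref{def:isnext}(3), or an instantaneous mode switch as in Lemma~\ref{lem:modechange-sound}. The claim is then that every state visited along such a realization satisfies $\mathrm{Inv}$, including intermediate states of continuous segments and not only the endpoints. I also record the trivial fact that if $\nu\models\varphi_s$ for some $s\in S$, then $\nu\models\mathrm{Inv}$ by the hypothesis $\varphi_s\models\mathrm{Inv}$.

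\textbf{Edge step, continuous case.} Take $(s_i,s_{i+1})$ established by $\isNext$. By Lemma~\ref{lem:isnext-sound}, together with the decision rule that materializes such an edge only for well-formed candidates with a refuted obligation, the witnessing execution starts in $\varphi_{s_i}$ and ends in $\varphi_{s_{i+1}}$. Throughout, it stays in $S(s_i)\cup S(s_{i+1})$. The modes are constant along a continuous evolution and equal $m(s_i)=m(s_{i+1})$, so every intermediate state satisfies $\varphi_{s_i}\vee\varphi_{s_{i+1}}$. Each such state therefore satisfies $\mathrm{Inv}$.

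\textbf{Edge step, discrete case.} Take an edge established by $\modeChange$. Lemma~\ref{lem:modechange-sound} gives that the switch maps every $\nu\models\varphi_{s_i}$ to some $\nu'\models\varphi_{s_{i+1}}$. The switch is instantaneous, so its only states are $\nu$ and $\nu'$, and both satisfy $\mathrm{Inv}$.

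\textbf{Induction.} The induction is on $k$. The base case $k=0$ is a single region, where $\mathrm{Inv}$ holds immediately. For the step, I append the last edge to a path realization whose final state satisfies $\varphi_{s_k}$ and apply the relevant edge case. That edge case also shows the new final state satisfies $\varphi_{s_{k+1}}$, which keeps the invariant ``current state lies in the current region'' needed for the next step.

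\textbf{Main obstacle.} The hardest step is the continuous case: I must show that the envelope condition really constrains \emph{all} intermediate states, including their mode components. The approach is to appeal to the evolution-domain semantics used in the proof of Lemma~\ref{lem:isnext-sound}, where the domain $\varphi_1\vee\varphi_2$ must hold at every time point. I would state explicitly that the continuous variables' trajectory never modifies the $\mathsf{mode}_d$ variables. A secondary subtlety is that the $\isNext$ witness is existential and need not start at the endpoint of the previous segment. I would therefore phrase the corollary as ``every realization of the path visits only $\mathrm{Inv}$-states''; the region-entailment hypothesis makes this hold regardless of which concrete witnesses are chained.
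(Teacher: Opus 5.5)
Your proposal is correct and follows the paper's proof. Both argue by induction on path length, with the continuous step using the envelope $S(s_i)\cup S(s_{i+1})$ from Lemma~\ref{lem:isnext-sound}, the discrete step using the instantaneous switch of Lemma~\ref{lem:modechange-sound}, and each step closed by $\varphi_s\models\mathrm{Inv}$. Your extra remarks make explicit two points the paper leaves implicit: the mode variables stay constant during evolution, so intermediate states satisfy $\varphi_{s_i}\vee\varphi_{s_{i+1}}$ and not merely $S(s_i)\cup S(s_{i+1})$; and the existential $\isNext$ witnesses need not chain.
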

\begin{proof}
By induction on the path length. The base case is immediate from
\(\varphi_{s_0}\models\mathrm{Inv}\). For the step, consider an edge
\((s_i,s_{i+1})\in E\). If it is a continuous edge, then by
Lemma~\ref{lem:isnext-sound} the witnessing execution remains within
\(S(s_i)\cup S(s_{i+1})\), and both \(\varphi_{s_i}\) and
\(\varphi_{s_{i+1}}\) entail \(\mathrm{Inv}\); hence, every state along the
execution satisfies \(\mathrm{Inv}\). If it is a discrete edge, then by
Lemma~\ref{lem:modechange-sound} the switch maps \(\varphi_{s_i}\)-states to
\(\varphi_{s_{i+1}}\)-states instantaneously, and both entail
\(\mathrm{Inv}\).
\end{proof}
 
For the oven, every region entails \(T\le200\); hence, by
Corollary~\ref{cor:path-safety}, every verified path -- in particular, the
happy path of Section~\ref{sec:motivating-example} -- preserves the safety
bound. This discharges the third verification goal stated there.

\paragraph*{Incompleteness.}

RDFdL is \emph{sound but incomplete}: every entailed \texttt{:next} or
\texttt{:modeChange} triple corresponds to verified hybrid behavior
(Lemmas~\ref{lem:isnext-sound} and~\ref{lem:modechange-sound}), but the
absence of a triple does not prove the absence of behavior. A transition may
fail to be materialized because the obligation is genuinely valid (the
behavior is impossible), because the prover's automation returned an
inconclusive verdict on non-polynomial arithmetic, or because a resource
bound was exceeded. All queries over the verified transition graph are
therefore to be read under this sound-approximation semantics; the rate of
inconclusive verdicts is reported as part of the evaluation
(Section~\ref{sec:evaluation}).

\section{Evaluation}
\label{sec:Evaluation}

We evaluate our approach from a logical perspective and its scalability and applicability.

\subsection{Evaluation setup}

\subsubsection{Case Study}
\label{subsec:case_study}

To illustrate the generality of our RDFdL workflow, we modeled four representative cyber–physical processes. A single-device thermostat controller (oven), a two-tank flow system with a two-dimensional non-linear ODE, a multi-device yogurt processing example with more sequential parallel modes, and a Drumboiler steam generation system with a two-dimensional linearized ODE. Figure \ref{fig:case_study1} shows the simulation model, state diagrams, and ODE of the oven and empty tank example. The oven case is the running example introduced in Section~\ref{sec:motivating-example}.\textbf{ In the evaluation, we provide the complete RDF graph, SHACL validation shapes, ODE records for the \(\mathsf{On}\) and \(\mathsf{Off}\) modes, generated dL proof obligations, and verified transition graph.} The oven is a single-device thermostat-style controller with one continuous variable, temperature \(T\), and two operating modes. The intended behavior is to heat from a low-temperature region to a target region while respecting the safety bound \(T \leq 200\). and the \emph{Empty‑Tank} models interconnected storage tanks taken from the standard \textsc{OpenModelica} “\texttt{TankSystem}”. Each tank is equipped with an ultrasonic level sensor and a valve that allows water to flow from the upper vessel (Tank 1) to the lower vessel (Tank 2). Figure \ref{fig:case_study2} shows yogurt and Drumboiler examples. The yogurt example shows a simplified yogurt production pipeline \cite{article_greek}. To produce yogurt, we first need to prepare standardized milk. We heat the milk to 55 degrees using the heater, then we transfer the heated milk to a homogenizer, which can increase pressure to 10 MPa to break down the yogurt components. After homogenization, we pasteurize the milk at around 90–95°C. Finally, we cool the milk to approximately 40–45°C. The drum-boiler example is based on the standard \texttt{Modelica.Fluid.DrumBoiler} model~\cite{modelica_msl}, as implemented in OpenModelica~\cite{openmodelica_usersguide}. It represents a steam drum boiler with an evaporator, furnace, pump, steam valve, and a PI controller for the drum level. For our verification workflow, we focus on the drum steam temperature $T_S$ and steam pressure $P_S$, and build a state diagram over low/normal/high pressure regions. We then fit a simple two-dimensional linear ODE for $(T_S, P_S)$ from FMU-based simulation traces.

\vspace{-2mm}
\begin{figure}[h]
  \centering
  \includegraphics[width=1\linewidth]{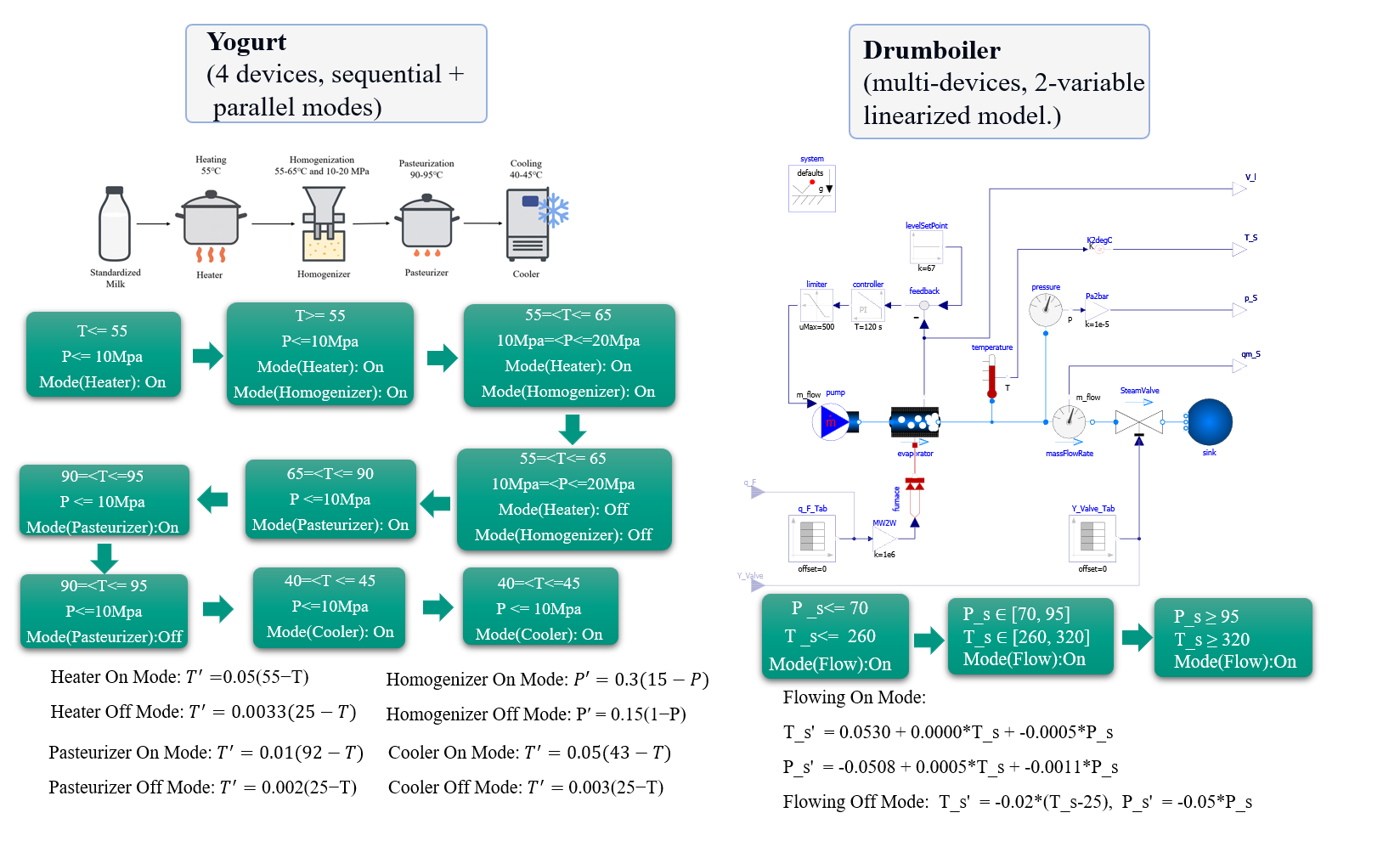}
  \captionsetup{skip=2pt}
  \caption{Yogurt and drum-boiler examples.}
  \label{fig:case_study2}
\end{figure}
\vspace{-2mm}

\begin{figure}[t]
  \centering
  \includegraphics[width=1\linewidth]{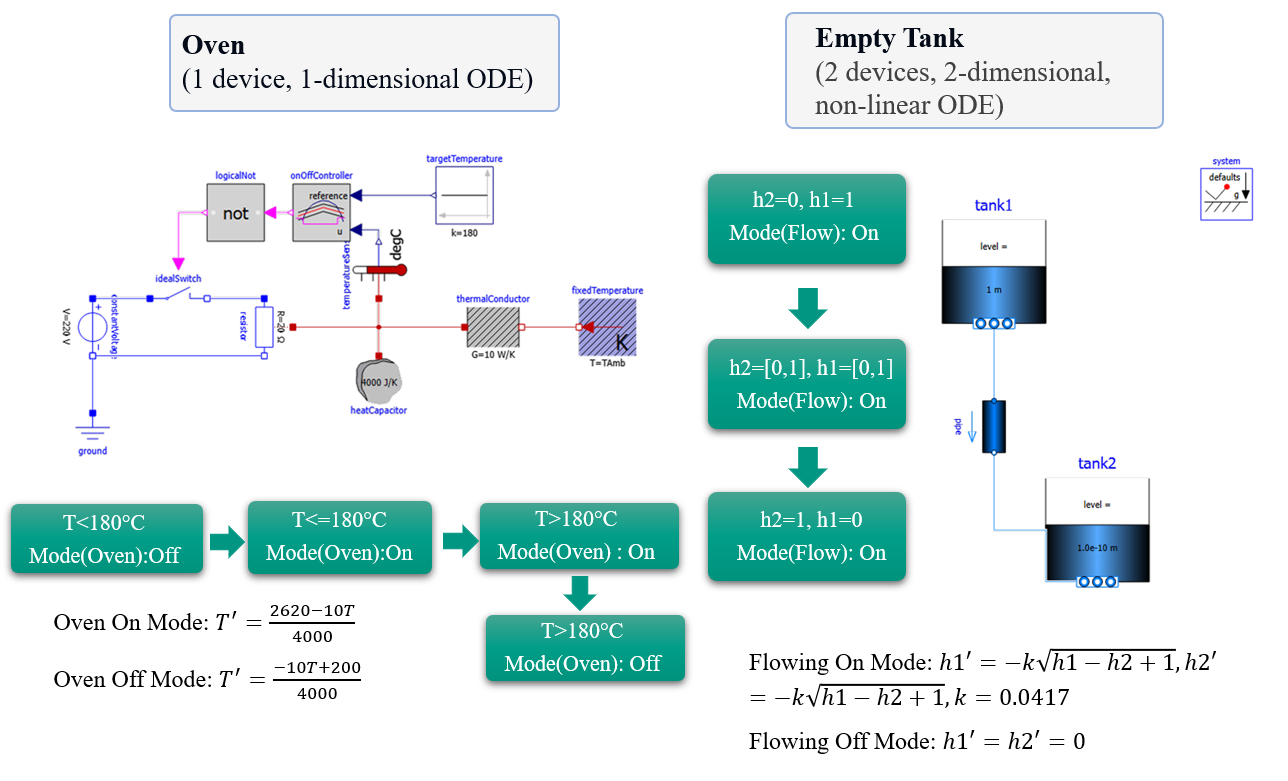}
  \caption{Oven and Tank Example}
  \label{fig:case_study1} 
\end{figure}

\subsection{Formal Graph Invariants}

We evaluate whether the RDF state graph satisfies the transitive closure of its machine-verified state transition relation. The proof of closure, i.e.\ that every composite path permitted by the physical model is represented in the data.

\paragraph{Closure.}
The graph should effectively capture the transitive closure of one-step transitions. If state A can transition to B and B to C (with verified triples for each), then logically A can reach C. We define \(G=(S,E)\), \(\;E=\{\texttt{:next},\;\texttt{:ModeChange}\}\), where $S$ is the finite set of resources of type \texttt{:State} and each edge in $E$ is a machine‑checked transition. For logical completeness we require the transitive closure $E^{*}$ to be contained in the query semantics of the RDF store, \emph{i.e.} if $(s_{i},p,s_{j})\in E$ and $(s_{j},q,s_{k})\in E$ then $(s_{i},E^{*},s_{k})$ is derivable. ASK queries, such as in Listing~\ref{lst:closure-ask}, can check for such closure.

\begin{listing}[h]
\centering
\begin{lstlisting}[%
    language=SPARQL,
    basicstyle=\ttfamily\small,
    numbers=none,
    frame=single,
    breaklines=true
]
ASK WHERE {
  ex:s0 (:next | :ModeChange)+ ex:s4 . }
-- Result: true
\end{lstlisting}
\caption{Closure check: Could \(s_0\) transition to \(s_4\)?}
\label{lst:closure-ask}
\end{listing}

Such queries can return true depending on how the triple store defines its entailment regime\footnote{\url{https://www.w3.org/TR/sparql11-entailment/}}.
Two ways can enable such closure queries using \texttt{+} and \texttt{*} in the query string:
First, to apply the behavior of SPARQL property path operators \texttt{OneOrMorePath} or \texttt{ZeroOrMorePath} in the presence of entailment, as implemented in Apache Jena, where no difference is made between entailed and initially asserted triples when evaluating queries with entailment enabled.
Second, to supply RDF with all entailed triples materialized to triple stores without entailment, or to those that implement the standard behavior (only the Basic Graph Patterns (BGPs) see entailed data during query evaluation).

\subsection{Soundness}

The integrated RDF–dL framework and its properties are described in our reference implementation and case study results. Notably, safety properties verified in the dL domain become available as entailment in the RDF domain for querying. Formal lemmas in our work establish that every verified edge (continuous or discrete) is sound and that any path composed of these edges will preserve global invariants. These guarantees illustrate the logic-based correctness of our approach.

\subsection{Scalability and Applicability}

Table~\ref{tab:results-b} combines \emph{the structural size} (number of states/modes) with the verification dL logic obligations. We observe:

\begin{table}[t]
  \centering
  \small
  \setlength{\tabcolsep}{2pt}
  \renewcommand{\arraystretch}{1.05}
  \begin{tabularx}{\columnwidth}{@{}lccccc>{\centering\arraybackslash}X@{}}
    \toprule
    \textbf{Use Case} &
    \textbf{States} &
    \textbf{Modes} &
    \textbf{\shortstack{State Transitions.}} &
    \textbf{\shortstack{Correct Transitions.}} &
    \textbf{\shortstack{dL Obligations.}} &
    \textbf{\shortstack{Sound\\ / Global\\Inv.}} \\
    \midrule
    Oven       & 4  & 2 & 4  & 4  & 40  & Yes/Yes \\
    Empty Tank & 5  & 2 & 4  & 4  & 36  & Yes/Yes \\
    Yogurt     & 14 & 8 & 18 & 18 & 350 & Yes/Yes \\
    Drumboiler & 5  & 2 & 4  & 4  & 40  & Yes/Yes \\
    \bottomrule
  \end{tabularx}

  \caption{Evaluation Results}
  \label{tab:results-b}
\end{table}

\paragraph{(1) Near‑linear scaling for linear dynamics.}
For use cases with linear or polynomial dynamics (Oven, Drumboiler, and Yogurt process), the number of proof obligations grows approximately
linearly with the number of modes and states.

\paragraph{(2) Non‑polynomial and multi‑dimensional dynamics.} The \textit{Tank} ODE in this example: \(
      h_1'=-k\sqrt{h_1-h_2+1},
      h_2'= k\sqrt{h_1-h_2+1}, \quad k=0.0417.
    \), in \textit{FlowingOn} forces KeYmaera X to switch from polynomial arithmetic to real algebraic certificates, triggering decision procedures with doubly exponential complexity. All proof obligations for the tank system are still discharged successfully, showing that our method is \emph{robust} for two‑dimensional nonlinear flows.

\paragraph{(3) Applicability across different scenarios.}
Our case studies cover:  
(i)~a \emph{single‑device} thermostat loop (Oven),  
(ii)~a \emph{two‑tank} flow system with cross-coupled ODEs (Empty Tank), and
(iii)~a \emph{multi-device} yogurt production line with 14 state changes, and (iv)~an industrial \emph{drum‑boiler} steam generation system. In all cases, the framework automatically
\textsf{(a)} generates the full set of state transitions in Jena,  
\textsf{(b)} discharges all dL obligations in KeYmaera X, and  
\textsf{(c)} proves preservation of the global safety invariant.

\subsection{SHACL shapes for states' validity}
To assess scalability, we repeatedly validate real-time SHACL state validation in three examples (oven, yogurt, and empty tank). We report the shape matches from these runs, shown as Table~\ref{tab:shacl-correct-shape}.

\begin{table}[h]
  \centering
  \caption{Valid real-time state mapping to correct SHACL shapes.}
  \label{tab:shacl-correct-shape}
  \scriptsize
  \begin{tabular}{p{1.1cm}p{4.5cm}p{1cm}}
    \hline
    Domain & Real-time State & Matched State\\
    \hline
    Oven   & Oven: On, x=60 & s12\\
    Oven   & Oven: Off, x=150 & s11\\
    Oven   & Oven: On, x=190 & s21\\
    Oven   & Oven: Off, x=200 & s22\\
    Yogurt & Heater: On, x=45; Homogenizer: Off, p=9 & s112\\
    Yogurt & Heater: Off, x=50;  Homogenizer: Off, p=8 & s122\\
    Yogurt & Heater: On, x=60; Homogenizer: On, p=15 & s311\\
    Yogurt & Heater: On, x=110; Homogenizer: Off, p=2 & s212\\
    Tank   & Tank1: Off,h1=1.0 ; Tank2: Off,h2=0.1 & s0\\
    Tank   & Tank1: On,h1=1.0 ; Tank2: On,h2=0.1 & s1\\
    Tank   & Tank1: Off,h1=0.5 ; Tank2: Off,h2=0.5 & s2\\
    Tank   & Tank1: On,h1=0.0 ; Tank2: On,h2=1.0 & s3\\
    Tank   & Tank1: Off,h1=0.0 ; Tank2: Off,h2=1.0 & s4\\
    \hline
  \end{tabular}
\end{table}

Even states that are not covered by the example instances are still detectable: out-of-range readings or inconsistent mode combinations are immediately flagged as invalid by SHACL. We omit those invalid rows here to keep the table focused on the correct state correspondences. The reason for using SHACL to validate real-time states is to provide the real-time physical system states to the RDFdL: only an observed state in the RDFdL can be used for state transitions and next-state reasoning. In other words, SHACL filters invalid real-time states, and RDFdL then reasons on the correct state to check which are the following state transitions. This design allows continuous runtime validation and transition verification in the same pipeline.

\section{Related Work}
\label{sec:Related_work}

\textbf{Semantic‐web AAS models.} Industry 4.0 and the rise of Smart Factories have brought an increasing focus to digital twins, virtual representations of physical industrial assets that remain linked to their real counterparts \cite{ARM2024275}. Asset Administration Shell (AAS) is an example implemented as a standardized container for an asset’s digital twin information in Industry 4.0 frameworks \cite{ABDELATY20222533}. The AAS serves as an industrial resource's "virtual envelope, organizing all relevant data about the asset throughout its life cycle. The AAS is often enriched with Semantic Web technologies: formal ontologies defined in RDF/OWL represent AAS data models \cite{9468266}. Recent research recognizes that AAS and semantic RDF models have strengths that complement each other \cite{RONGEN2023103910}—integrating knowledge graphs with Industry 4.0 digital twin frameworks to enable interoperability and machine-readable intelligence.

Semantic web technologies have been applied to model industrial control systems and manufacturing processes, capturing their static structure and ensuring data consistency. Ontologies expressed in RDF/OWL can represent triples in a production system, enabling the integration of heterogeneous engineering data. For example, \cite{Feldmann2016} illustrates how combining RDF graphs with SPARQL queries can detect inconsistencies between interdisciplinary engineering models in an automated production system. SHACL has been used to enforce domain constraints, e.\,g.\ \cite{9779174} uses an ontology and SHACL to verify that an industrial control system design meets the IEC-62443 security requirements. These semantic models and constraints provide a machine-readable, declarative specification of the industrial process but remain static. In other words, while RDF/SHACL can validate structural correctness or configuration compliance, they do not capture the dynamic behavior or control logic of the physical processes. The limitation is that semantic models stop at describing what the system is, without analyzing what the system does dynamically.

\textbf{Formal verification of CPS.} A Cyber-Physical System (CPS) is one of the main concepts in Industry 4.0, providing interaction between the physical and virtual worlds \cite{LINS2020106193}. Formal verification of CPS addresses the challenge of proving safety and correctness in systems that mix control logic with continuous physical processes \cite{Platzer18}. In dL, one can write specifications such that if certain initial conditions hold, a safety property remains true after any control decisions and continuous evolutions. The dL theorem prover KeYmaera X is used to verify CPS. KeYmaera X can prove invariants and temporal properties by reasoning symbolically about differential equations and control choices \cite{10.1007/978-3-319-21401-6_36}. For example, dL can prove that a robot arm will never collide with a wall or a chemical reactor will never overheat under its control program \cite{Platzer18}. These verifications are machine-checked proofs that offer a high level of assurance. 

\textbf{Discrete dynamics and the Semantic Web}. Few works consider ontological modelling with the modelling of the dynamics of systems.
Notably, some work exists that models processes as discrete steps, e.\,g.
\cite{10.1007/978-3-030-54994-7_9} uses an ontology-based knowledge base to support verifying requirements against a description of a concurrent procedural flow in manufacturing, where the dynamics are represented as discrete steps. \cite{DBLP:conf/semweb/KaferH18} takes inspiration from business process modelling to define a process ontology, using which processes can be modelled that are executable while honouring the assumptions of semantic web technologies. \cite{DBLP:journals/jair/HaririCMGMF13} Investigate discretely evolving description logic knowledge bases. Those works, however, cannot consider continuous descriptions given in ODEs.

\section{Discussion}
\label{sec:discussion}

\textbf{Why RDFdL?} Using SPARQL, we can query RDF graphs, but it has no semantics for continuous dynamics. Pure dL proves reachability over ODE‐based modes, yet knows nothing about metadata or master data. The running example from Section~\ref{sec:Introduction} illustrates this gap: we want to talk about \emph{buttons} and \emph{technicians} while also requiring reachability along a dL‑verified path. RDFdL closes this gap: every dL‐verified transition materialized as an RDF triple that can be queried using the predicates  
\(
  \texttt{:next}\;\bigl|\;\texttt{:ModeChange}
\).
Hence, a SPARQL query can perform a reachability query over the state space while, at the same time, following static RDF links  
\(
  \textit{button}\;\to\;\textit{device}\;\to\;\textit{technician}.
\),
see Listing~\ref{lst:sparql-tech}.
This demonstrates that RDFdL combines the rich metadata of RDF and SPARQL queries with proven dL lemmas for continuous behavior.

Beyond the running example, RDFdL also supports reachability-style queries that extract complete “happy paths’’ of verified transitions between two states. Appendix~\ref{sec:application} illustrates this for the oven case.

\section{Conclusion}
\label{sec:Conclusion}

We introduced RDFdL, a framework to link knowledge graphs (RDF) with Differential Dynamic Logic (dL). We presented (i) an RDF-based representation of hybrid systems that captures continuous dynamics via ODEs and SHACL-based state definitions, (ii) an automated pipeline that extracts dL proof obligations from RDF and verifies them with the KeYmaera X prover, and (iii) the RDFdL framework that exposes verified transitions as triples for SPARQL engines. We showed that RDFdL scales to multiple industrial-style case studies and that the number of proof obligations grows roughly linearly with the number of modes while still handling nonlinear, multidimensional ODEs.

\section*{Supplemental Material Statement}

Supplemental codes and simulation models for all the case studies we introduced in the paper are available at \url{https://anonymous.4open.science/r/RDFdL-2026}. The repository contains the full implementation of the RDFdL pipeline and build scripts that are needed to reproduce the results in the paper. And the ontology information is listed in \url{https://purl.archive.org/rdfdl/vocab}. 

\section*{Appendix}

\section{ODE}
\label{ssec:ODE}

In the oven example. We introduce individuals or blank nodes that represent state variables and the ODE. For example, \texttt{ex:OvenOnMode} and \texttt{ex:OvenOffMode} have property \texttt{ex:hasDevice} which connects the \texttt{ex:Oven} (a \texttt{Heater} device). And they may have a property \texttt{ex:hasODE} linking it to a blank node that encodes the oven's ODE. RDF details are shown in the listing \ref{lst:rdf-oven-onmode}.

\begin{listing}[t]
  \scriptsize
  \captionsetup{skip=2pt}
  \caption{RDF encoding of \texttt{OvenOnMode}.}
  \label{lst:rdf-oven-onmode}
  \begin{lstlisting}[frame=single, basicstyle=\ttfamily\scriptsize,
                     numbers=none, breaklines=true]
ex:OvenOnMode rdf:type     ex:ModeRecord ;
    ex:hasDevice           ex:Oven ;
    ex:hasMode             ex:OnMode ;
    ex:hasODE [ rdf:type ex:ODE ;
                ex:derivative             "0.605 - 0.0025*(x - 20)" ;
                ex:startingCondition      "x <= 180" ;
                ex:evolutionDomainConstraint "x <= 200" ;
                ex:evolvingVariable       ex:x
              ] .
  \end{lstlisting}
\end{listing}

\section{State representation in SHACL}
\label{ssec:state_in_Shacl}
Below, we provide the complete definitions of the SHACL node shapes and the states. NodeShapes are shown here. Listing~\ref{lst:s11-shapes} shows the SHACL shapes defining the oven on mode and its state:

\begin{listing}[h]
  \scriptsize
  \captionsetup{skip=2pt}
  \caption{SHACL shape for \texttt{ex:s11} (oven example).}
  \label{lst:s11-shapes}
  \begin{lstlisting}[frame=single, basicstyle=\ttfamily\scriptsize,
                     numbers=none, breaklines=true]
ex:s11OvenShape rdf:type sh:NodeShape ;
  sh:property [ sh:path ex:x ; sh:minCount 1 ; sh:maxCount 1 ;
                sh:maxInclusive "180"^^xsd:long ] ;
  sh:property [ sh:path ex:mode ; sh:hasValue ex:OffMode ;
                sh:minCount 1 ; sh:maxCount 1 ] .

ex:s11Shape rdf:type sh:NodeShape ;
  sh:property [ sh:path ex:hasOven ; sh:node ex:s11OvenShape ] ;
  sh:targetClass ex:State .

ex:s11 rdf:type ex:State ;
  ex:hasOven ex:Oven ;
  ex:hasShape ex:s11Shape .
  \end{lstlisting}
\end{listing}

\section{More Application}
\label{sec:application}

The engineer often specifies a “happy path,” i.e., an idealized sequence of states from the initial state to the goal state under nominal conditions. For the Oven example, we run the SPARQL query in Listing~\ref {lst:oven-happy-path} to extract all verified transitions on the "happy path" from the starting state \texttt{ex:s11} to \texttt{ex:s22} ending state. The table shows the results of executing this SPARQL query, which align perfectly with the ideal workflow described in the use case; every state transition matches the process steps in Section \ref{subsec:case_study}. 

\begin{listing}[h]
  \scriptsize
  \captionsetup{skip=2pt}
  \caption{SPARQL query and extracted “happy-path” transitions (oven).}
  \label{lst:oven-happy-path}
  \begin{lstlisting}[language=SPARQL,frame=single, basicstyle=\ttfamily\scriptsize,
                     numbers=none, breaklines=true, breakatwhitespace=true]
SELECT ?fromState ?toState ?predicate ?predLabel WHERE {
  ?fromState ?predicate ?toState .
  ex:s11 (:next|:ModeChange)* ?fromState .
  ?toState (:next|:ModeChange)* ex:s22 .
  OPTIONAL { ?predicate rdfs:label ?predLabel }
} ORDER BY ?fromState
  \end{lstlisting}
  \vspace{1mm}
  \begin{tabularx}{\linewidth}{@{} l l X l @{}}
    \toprule
    fromState & predicate & predLabel & toState \\
    \midrule
    \texttt{ex:s11} & \texttt{ex:Oven\_OffMode\_to\_OnMode} &
      "Change Off → On" & \texttt{ex:s12} \\[2pt]
    \texttt{ex:s12} & \texttt{:next} & & \texttt{ex:s21} \\[2pt]
    \texttt{ex:s21} & \texttt{ex:Oven\_OnMode\_to\_OffMode} &
      "Change On → Off" & \texttt{ex:s22} \\[2pt]
    \texttt{ex:s22} & \texttt{:next} & & \texttt{ex:s11} \\
    \bottomrule
  \end{tabularx}
\end{listing}

\appendix

\bibliography{tgdk-v2021-sample-article}

\end{document}